\documentclass{article}

\usepackage{microtype}
\usepackage{graphicx}
\usepackage{subcaption}
\usepackage{booktabs} 

\usepackage{hyperref}

\usepackage[accepted]{icml2026}

\usepackage{amsmath}
\usepackage{amssymb}
\usepackage{mathtools}
\usepackage{amsthm}

\usepackage[capitalize,noabbrev]{cleveref}

\theoremstyle{plain}
\newtheorem{theorem}{Theorem}[section]

\newtheorem{corollary}[theorem]{Corollary}
\theoremstyle{definition}

\theoremstyle{remark}

\usepackage[textsize=tiny]{todonotes}

\usepackage{xspace}
\newcommand{\ours}{Model-Dowser\xspace}

\usepackage[table,xcdraw]{xcolor}
\definecolor{lightgrey}{rgb}{0.92,0.92,0.92}

\usepackage{rotating}
\usepackage[table]{xcolor}
\usepackage{booktabs}
\usepackage{multirow}
\usepackage{tabularx}

\usepackage{etoc}

\newcommand{\rev}[1]{{#1}}

\icmltitlerunning{\ours: Data-Free Importance Probing to Mitigate Catastrophic Forgetting in MLLMs}

\begin{document}

\twocolumn[
  \icmltitle{\ours: Data-Free Importance Probing to Mitigate Catastrophic Forgetting in Multimodal Large Language Models}



  \icmlsetsymbol{equal}{*}

  \begin{icmlauthorlist}
    \icmlauthor{Hyeontaek Hwang}{equal,kaist}
    \icmlauthor{Nguyen Dinh Son}{equal,kaist}
    \icmlauthor{Daeyoung Kim}{kaist}
    
  \end{icmlauthorlist}

  \icmlaffiliation{kaist}{School of Computing, KAIST, Daejeon, Republic of Korea}

  \icmlcorrespondingauthor{Hyeontaek Hwang}{hwanght@kaist.ac.kr}
    \icmlcorrespondingauthor{Nguyen Dinh Son}{
nguyendinhson@kaist.ac.kr}

  \icmlkeywords{Machine Learning, ICML}

  \vskip 0.3in
]



\printAffiliationsAndNotice{\icmlEqualContribution}  

\begin{abstract}
  Fine-tuning Multimodal Large Language Models (MLLMs) on task-specific data is an effective way to improve performance on downstream applications.
  However, such adaptation often leads to a degradation in generalization on pretrained tasks, a phenomenon known as Catastrophic Forgetting.
  Existing methods that aim to mitigate this issue either become ineffective when fine-tuning deeper layers of the language decoder or scale poorly with increasing model size.
  To address these limitations, we propose \ours, a novel sparse fine-tuning approach for MLLMs.
  \ours measures a principled importance score for each model parameter with respect to pretrained generalization (prior to downstream adaptation) by jointly considering weight magnitudes, input activations, and output sensitivities.
  During fine-tuning, \ours selectively preserves high-importance parameters and updates the remaining.
  Comprehensive experiments on two representative MLLMs, LLaVA and NVILA, demonstrate that \ours effectively mitigates catastrophic forgetting and consistently outperforms prior methods, while remaining resource-efficient and scalable to multi-billion-parameter models.~\footnote{Code link: \href{https://model-dowser.github.io/}{model-dowser.github.io}}
\end{abstract}
\section{Introduction}
\label{sec:intro}

Recently, Multimodal Large Language Models (MLLMs) \cite{liu2023visual, liu2025nvila, wang2024qwen2, chen2024expanding} have emerged as a central focus in the field of multimodal learning.
These models typically consist of a pretrained large language decoder (often a transformer-based architecture \cite{vaswani2017attention} integrated with vision encoders through a connection module.
The success of MLLMs can be attributed to their ability to generalize across various tasks, which is driven by large-scale visual instruction tuning and their substantial model size, often reaching billions of parameters.
Despite their strong zero-shot performance, these models frequently exhibit suboptimal results on domain-specific downstream tasks \cite{zhai2024investigating, zhu2024model, huang2025learn}. 
As a result, further fine-tuning is often necessary to better align them with task-specific instructions and domain distributions.

\begin{figure}[t]
    \centering

    \includegraphics[width=0.99\linewidth]{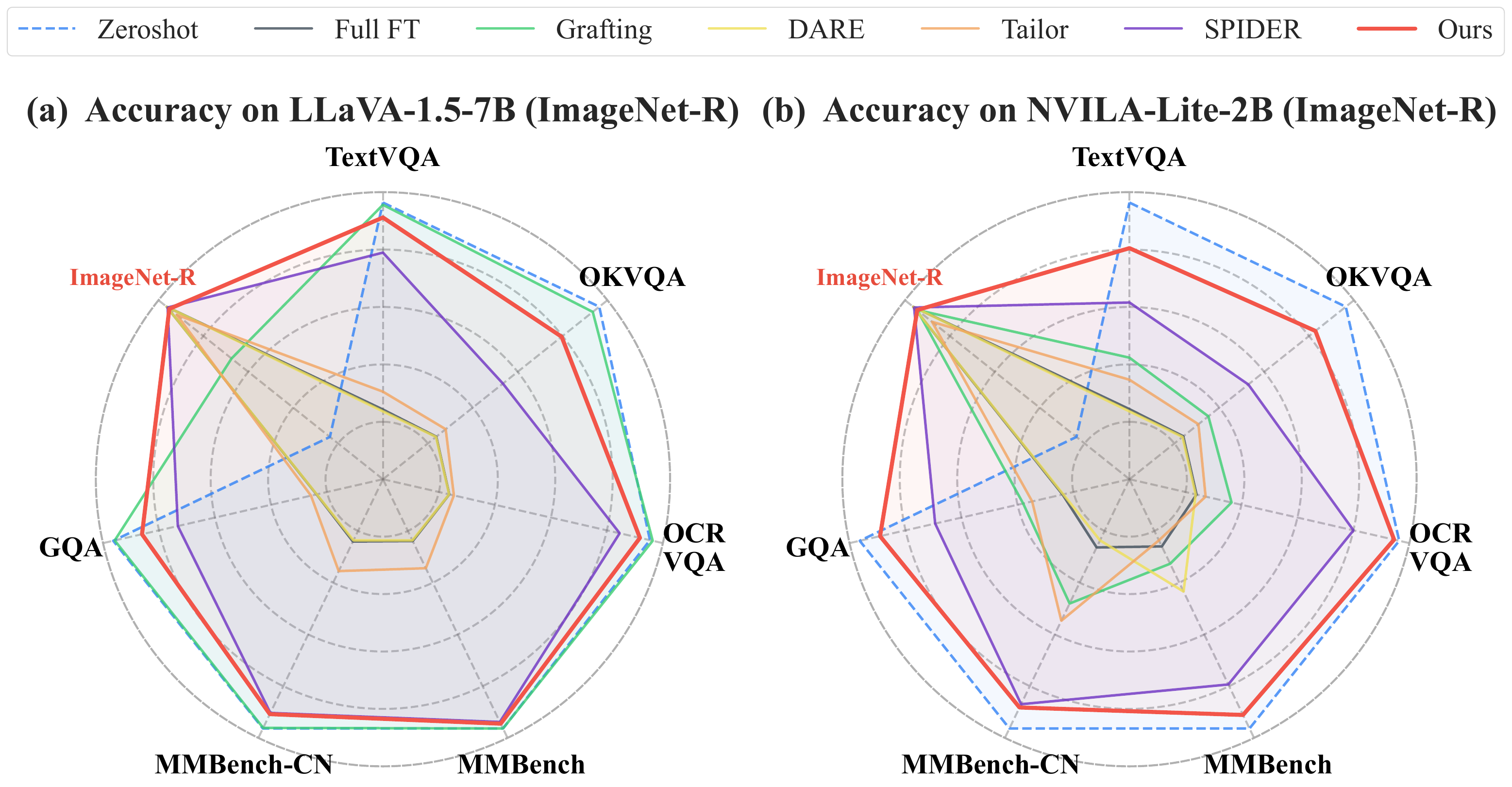}\\
    \includegraphics[width=0.99\linewidth]{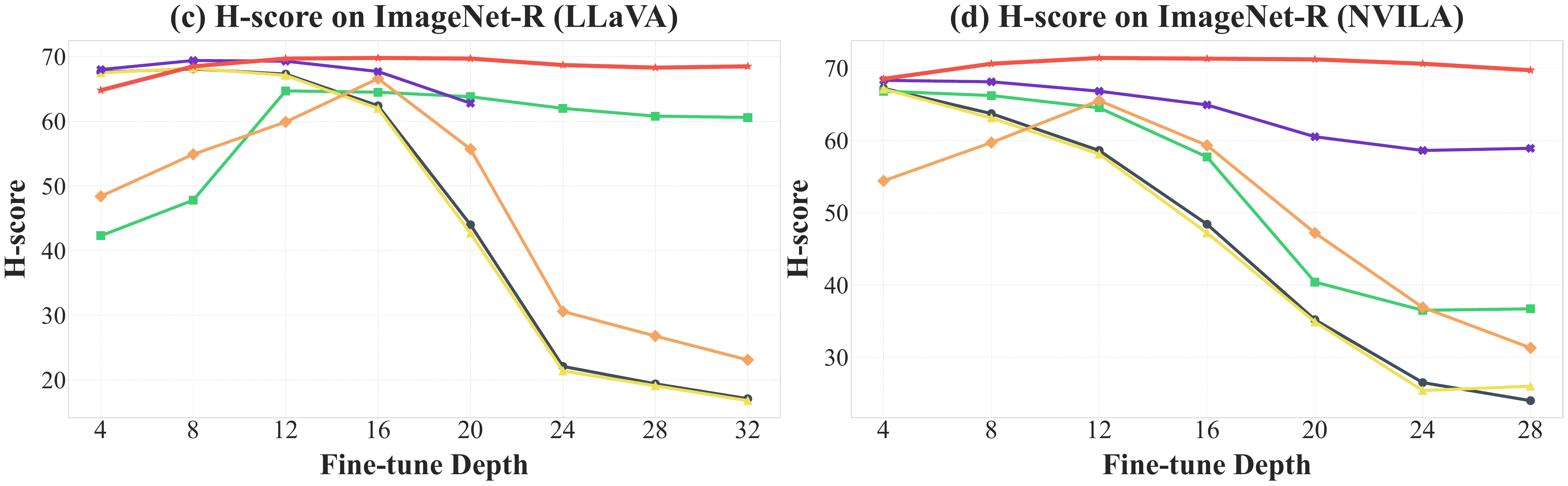}

       \caption{\textbf{Performance comparison of catastrophic forgetting mitigation methods on LLaVA-1.5 (7B) and NVILA-Lite (2B) fine-tuned on ImageNet-R.} \textbf{(a)-(b)} Radar charts illustrating the balance between downstream adaptation and upstream capabilities. \textbf{(c)-(d)} H-score stability across varying fine-tuning depths. \textbf{\ours} (\textcolor{red}{red line}) consistently achieves robust performance compared to previous works.}
    
    \label{fig:concept}
\end{figure}

A common approach to adapting MLLMs for specific tasks involves creating an instruction-following dataset tailored for the target task.
During fine-tuning, the large language decoder is typically optimized while other components are kept frozen, enabling efficient alignment with downstream instructions \cite{han2024parameter, zhou-etal-2024-empirical, zhai2024investigating}.
Although this fully supervised fine-tuning strategy is straightforward, it often leads to significant generalization loss, a phenomenon known as \textit{Catastrophic Forgetting} \cite{goodfellow2013empirical, zhai2024investigating, dong2021should}.
This issue arises because downstream instruction data is usually limited in size and narrowly focused.
As a result, fine-tuning MLLMs on such data can overwrite their pretrained representations.

Recent efforts to mitigate catastrophic forgetting in MLLMs mainly fall into two categories: \textit{model post-merging} methods \cite{zhu2024model, yu2024language, panigrahi2023task} and \textit{sparse fine-tuning} methods \cite{huang2025learn, hui2025hft}. 
Post-merging approaches aim to preserve pretrained knowledge by fusing a pretrained model with its fine-tuned counterpart, whereas sparse fine-tuning methods restrict parameter updates to a subset of model weights to limit disruption to pretrained representations.
Despite their effectiveness, existing approaches are predominantly evaluated under shallow fine-tuning settings, in which only a small subset of the language decoder’s final layers is updated during downstream adaptation. 
For example, ModelTailor \cite{zhu2024model} fine-tunes only the last 12 layers of LLaVA \cite{liu2023visual}, while the state-of-the-art sparse method Specialization via Importance Discrepancy Evaluation for Refinement (SPIDER) \cite{huang2025learn} reports the results with less than the last 5 layers of LLaVA. 
However, recent studies \cite{chen2024image, zhang2025llava} indicate that earlier layers of the language decoder play a critical role in multimodal understanding, suggesting that shallow fine-tuning may not fully exploit the model’s adaptation capacity.
Motivated by this observation, we examine catastrophic forgetting under deeper fine-tuning regimes, as illustrated in \cref{fig:concept}. 
We find that post-merging methods degrade rapidly once fine-tuning extends to earlier decoder layers, likely because extensive parameter updates disrupt the pretrained latent space in ways that cannot be recovered through post-hoc fusion. 
Sparse fine-tuning methods exhibit more stable behavior in this setting, but their robustness comes at the expense of substantial memory and computational overhead, limiting scalability for MLLMs.

Driven by these limitations, we propose \ours, a novel sparse fine-tuning method for MLLMs that balances downstream task performance with preservation of pretrained generalization, while maintaining the same memory complexity as standard fine-tuning. 
Our approach is motivated by a simple question: \textbf{\textit{Which parameter perturbations most strongly affect the model’s outputs?}}
We hypothesize that preserving generalization can be achieved by minimizing output shifts induced by downstream fine-tuning.
To this end, we provide a theoretical analysis showing that output shifts induced by parameter updates can be effectively characterized by jointly considering weight magnitudes, input activations, and output sensitivities. 
Based on this insight, \ours assigns an importance score to each parameter prior to downstream adaptation and selectively freezes high-importance parameters during fine-tuning, allowing the model to adapt to target tasks without overwriting pretrained knowledge.

Extensive experiments on two representative MLLM architectures, LLaVA \cite{liu2024improved} and NVILA \cite{liu2025nvila}, across diverse downstream tasks demonstrate that \ours consistently achieves state-of-the-art performance in mitigating catastrophic forgetting. 
Furthermore, our method is data-free and computationally efficient, making it highly scalable to large models.
Our contributions are summarized as follows:

\begin{itemize}
    \item \textbf{Forgetting diagnosis across fine-tuning depth.} 
    Our analysis reveals that MLLMs experience severe catastrophic forgetting as fine-tuning extends to deeper portions
    of the language decoder, and existing approaches are either ineffective under this regime or exhibit inconsistent behavior across different fine-tuning settings.
    \item \textbf{A scalable sparse fine-tuning method.} 
    We propose \ours, a novel sparse fine-tuning method that introduces a data-free importance score derived from input activations and output sensitivity, and selectively freezes critical parameters prior to adaptation to enable effective downstream learning without loss of pretrained knowledge.
    \item \textbf{Theoretical justification.} We provide a theoretical analysis showing that our importance score captures the sensitivity of model outputs to individual parameter perturbations, explaining why preserving high-score parameters helps retain pretrained generalization.
    \item \textbf{State-of-the-art results with practical efficiency.}
    Our experiments on different MLLMs and downstream tasks show that \ours consistently outperforms prior methods, while remaining highly resource-efficient and scalable.
\end{itemize}
\section{Related Works}
\label{sec:related_works}

\subsection{Catastrophic Forgetting}
\label{subsec:CataForget}



In the context of large foundation models, catastrophic forgetting refers to the phenomenon where a model loses its ability to generalize to previously learned or unseen tasks after being adapted to downstream tasks \cite{wang2024comprehensive}. 

\textbf{In Large Language Models (LLMs)}, existing forgetting-mitigation methods can be broadly categorized into three groups.
(1) \textit{Additive methods} \cite{hu2022lora, li-liang-2021-prefix, lester-etal-2021-power, zhang2021tip, sung2022lst} introduce a small number of additional parameters to learn task-specific knowledge while keeping the pretrained weights fixed. 
Although these approaches are training-efficient, the resulting architectural modifications complicate deployment and pose challenges for continual fine-tuning.
(2) \textit{Post-merging methods} \cite{yu2024language, panigrahi2023task, li2022parameter} aim to fuse pretrained and fine-tuned weights using heuristic selection or importance criteria. 
However, such methods often struggle to balance generic knowledge and domain-specific adaptation, making them sensitive to hyperparameter choices and limiting their robustness.
(3) \textit{Sparse fine-tuning methods}, such as \cite{hui2025hft, pmlr-v235-lu24p, pmlr-v235-xu24ag}, update a subset of model parameters during downstream adaptation and have shown potential in mitigating catastrophic forgetting. 
Nevertheless, due to the lack of a principled parameter importance criterion, these methods may inadvertently modify parameters critical to generalization.

\textbf{In Multimodal Large Language Models (MLLMs)}, early studies mainly investigate catastrophic forgetting as a side effect of continual learning \cite{guo-etal-2025-hide, pmlr-v267-chen25n, chen2024coin, guo2025federated}, which differs from our focus on downstream task adaptation. 
ModelTailor \cite{zhu2024model} is among the first methods specifically designed for MLLMs, adopting a post-merging strategy. 
While it improves generalization compared to LLM-based baselines, it often sacrifices target-task performance to preserve pretrained representations. 
More recently, SPIDER \cite{huang2025learn} proposes a sparse fine-tuning approach that actively measures parameter importance during training based on gradient information and weight magnitudes, achieving strong empirical results in both downstream adaptation and generalization retention. 
However, existing MLLM-specific methods are typically evaluated only when fine-tuning the final layers of the language decoder, leaving their effectiveness under deeper fine-tuning largely unexplored. 
As we demonstrate later, these methods exhibit weak and unstable performance when fine-tuning extends to earlier layers.

We provide additional discussion of other anti-forgetting methods for smaller models in \cref{app:more_related_work} and of continual learning in \cref{app:cl_exp}.

\subsection{Parameter Importance}
\label{subsec:Importance}
Measuring parameter contributions is essential for mitigating catastrophic forgetting, yet early second-order methods such as Optimal Brain Surgeon \cite{hassibi1992second} are computationally prohibitive for foundation models. 
As a result, magnitude-based methods \cite{mallya2018packnet,tanaka2020pruning} have emerged; however, they rely on assumptions of homogeneous activations, which do not hold in modern MLLMs. 
More recent approaches, including pruning by Weight AND Activation (Wanda) \cite{sun2024asimple}, employ empirical criteria based on weight, activation, or gradient statistics, yet offer limited theoretical analysis of their impact on output-level functional stability. 
Moreover, these criteria can be unreliable under massive activations \cite{sun2024massive}. 
SPIDER \cite{huang2025learn} employs dynamic updates during training based on weight magnitude and gradient norm, achieving balanced performance. 
However, SPIDER may incur substantial memory overhead due to maintaining the accumulated per-parameter gradient history and the soft mask matrix.
To address these limitations, we propose a \ours that directly quantifies output-level functional impact, providing a principled and memory-efficient mechanism for preserving functionally critical parameters via a simple binary mask.
\definecolor{myYellow}{HTML}{f7bf57}

\begin{figure*}[ht]
    \centering
    \includegraphics[width=0.95\textwidth]{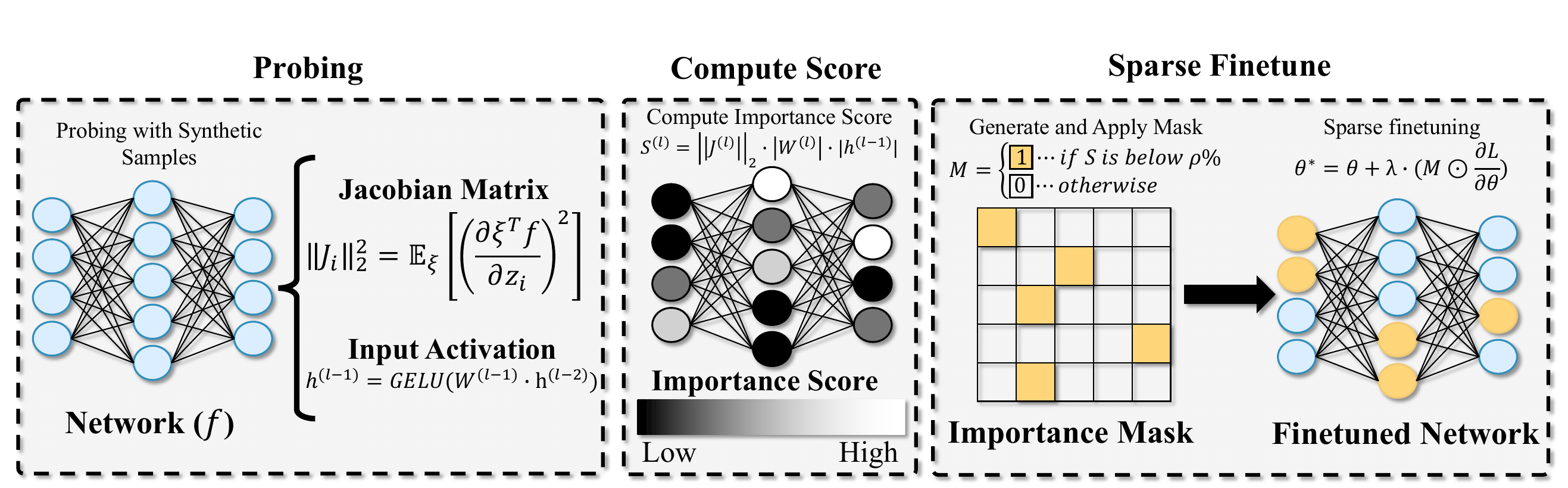}
    \caption{\textbf{Overall Architecture of \ours.} The proposed method consists of three main steps. \textbf{1. Probing} (\cref{subsec:estimation}): samples the Jacobian matrix and input activation with synthetic data samples on every layer ($l$). \textbf{2. Compute Score} (\cref{subsec:estimation}): generate parameter-wise importance score with Jacobian matrix, weight magnitude, and activation. \textbf{3. Sparse Finetune} (\cref{subsec:sparse_finetuning}): update the least important $\rho\%$ of parameters (highlighted in \textcolor{myYellow}{yellow}) based on their importance scores for the target downstream task.}
    \label{fig:method}
\end{figure*}
\section{Methodology}
\label{sec:method}

In this section, we introduce \ours, which mitigates catastrophic forgetting by identifying and preserving functionally critical parameters through the three-stage pipeline illustrated in \cref{fig:method}. This process consists of data-free sensitivity probing and functional importance scoring to guide sparse fine-tuning and ensure stable representational adaptation.

\subsection{Functional Importance Scoring}
\label{subsec:imp_score}
Modern MLLMs predominantly employ non-homogeneous activation functions, such as GELU \cite{hendrycks2016gelu}, SiLU \cite{ELFWING20183}, and GLU variants \cite{shazeer2020glu}, in which the weight scale is no longer directly aligned with the output-level functional impact.
In such architectures, the magnitude of a parameter no longer reliably reflects its functional contribution, as the non-linear curvature of such non-homogeneous activation can decouple weight scale from representational impact. 
To address this, we propose a sensitivity-based functional importance measure formalized in \cref{theorem:score} to quantify importance via the estimated output shift $\|\Delta f\|_2$ induced by parameter perturbations. The corresponding proof is deferred to \cref{app:score_proof}.

\begin{theorem}[Functional shift for single-weight perturbation]
\label{theorem:score}
    Consider a layer $l$ in an MLLM model $f$. Under first-order Taylor approximation, the L2 norm of output shift $\Delta f$ when perturbing a weight $W^{(l)}_{ij}$ is given by:
    \begin{equation}
        \|\Delta f\|_2 \approx \|J^{(l)}_i\|_2  \cdot |\Delta W^{(l)}_{ij}| \cdot |h^{(l-1)}_j|,
    \end{equation}
    where $J^{(l)}_i = \partial f / \partial z^{(l)}_i$ denotes the $i$-th column of the Jacobian matrix of the network output with respect to the pre-activation vector $z^{(l)}$, and $h^{(l-1)}$ is the input activation of the $l$-th layer (output of layer $l-1$).
\end{theorem}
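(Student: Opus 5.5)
We work in the standard setting of the statement. Layer $l$ is linear before its nonlinearity, $z^{(l)} = W^{(l)} h^{(l-1)} + b^{(l)}$, and the network output $f$ depends on $W^{(l)}$ only through $z^{(l)}$. So we can view $f$ as a composition $f = g^{(l)}(z^{(l)})$, where $g^{(l)}$ collects the activation of layer $l$ and all later layers. The plan is to apply the chain rule through $z^{(l)}$ and then take a first-order Taylor expansion in the single perturbed entry $W^{(l)}_{ij}$, holding every other parameter and the input fixed.

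\textbf{Step 1: the perturbed pre-activation.} Perturb only the entry $(i,j)$, so $\Delta W^{(l)} = \Delta W^{(l)}_{ij} E_{ij}$, where $E_{ij}$ is the matrix unit. Since $h^{(l-1)}$ is produced by earlier layers, it is unchanged. The pre-activation therefore changes exactly, with no approximation:
\begin{equation*}
\Delta z^{(l)} = \Delta W^{(l)}_{ij}\, h^{(l-1)}_j\, e_i .
\end{equation*}
Here $e_i$ is the $i$-th standard basis vector, so only coordinate $i$ of $z^{(l)}$ moves.

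\textbf{Step 2: first-order expansion.} Expand $g^{(l)}$ to first order around the unperturbed $z^{(l)}$:
\begin{equation*}
\Delta f = \frac{\partial f}{\partial z^{(l)}}\,\Delta z^{(l)} + O\big(\|\Delta z^{(l)}\|^2\big) \approx J^{(l)}_i\, \Delta W^{(l)}_{ij}\, h^{(l-1)}_j ,
\end{equation*}
where $J^{(l)}_i$ is the $i$-th column of $\partial f/\partial z^{(l)}$. Taking the Euclidean norm and using homogeneity for the two scalar factors gives $\|\Delta f\|_2 \approx \|J^{(l)}_i\|_2\,|\Delta W^{(l)}_{ij}|\,|h^{(l-1)}_j|$, which is the claimed formula.

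\textbf{Main obstacle: what the ``$\approx$'' means.} The algebra is short; the delicate part is making the approximation precise. I would state it as follows. If $g^{(l)}$ is $C^2$ near $z^{(l)}$, then the remainder is bounded by $\tfrac12 \sup\|\nabla^2 g^{(l)}\|\,|\Delta W^{(l)}_{ij} h^{(l-1)}_j|^2$. By the reverse triangle inequality, the norm identity then holds up to an error of order $|\Delta W^{(l)}_{ij} h^{(l-1)}_j|^2$.

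This is where the non-homogeneous activations from the motivation (GELU, SiLU) help, since they are smooth, so the Hessian is well defined. I would also point out two further facts. For transformer components such as attention and LayerNorm, the composition is still smooth away from degenerate points. For tokenwise or batched outputs, the same argument applies per input, and the columns of the Jacobian are simply stacked.
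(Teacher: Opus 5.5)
Your proposal is correct and follows the paper's proof essentially step for step: the exact sparse change $\Delta z^{(l)} = \Delta W^{(l)}_{ij} h^{(l-1)}_j e_i$, then a first-order Taylor expansion of the downstream map, then taking the norm; your remainder bound via the reverse triangle inequality also makes the ``$\approx$'' more precise than the paper, which simply drops the $\mathcal{O}(\|\Delta z^{(l)}\|^2)$ term. One caution applies only to your closing aside: in a transformer the same $W^{(l)}$ acts on every token position $t$, so perturbing $W^{(l)}_{ij}$ moves $z^{(l)}_{i}$ at all positions, and the first-order shift is $\Delta W^{(l)}_{ij}\sum_t J^{(l)}_{i,t} h^{(l-1)}_{j,t}$. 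That sum does not reduce to a single stacked-column product, and bounding it needs the triangle-inequality argument of the corollary instead.
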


\cref{theorem:score} can be extended to multi-weight perturbations, providing an upper bound under the first-order Taylor approximation for the total functional shift.

\begin{corollary}[Functional shift for multi-weight perturbation]
\label[corollary]{cor:multi}
    Let $\Delta \mathcal{W} = \{\Delta W^{(l)}\}_{l=1}^L$ be the set of perturbation matrices for all layers in the model $f$. Under the first-order Taylor approximation, the total output shift $\|\Delta f\|_2$ is bounded by the global aggregate of individual parameter sensitivities as follows: 
    \begin{equation}
            \|\Delta f\|_2 \lessapprox \sum_{l,i,j} \|J^{(l)}_i\|_2 |\Delta W^{(l)}_{ij}| |h^{(l-1)}_j|.
    \end{equation}
\end{corollary}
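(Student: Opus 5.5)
The plan is to write the total output shift, under the first-order Taylor approximation, as a sum of the individual single-weight contributions, and then apply the triangle inequality. Everything is at a fixed input. Treat $f$ as a function of the full parameter set $\mathcal{W}=\{W^{(l)}\}_{l=1}^L$. The first-order expansion then reads
\begin{equation*}
\Delta f \approx \sum_{l=1}^{L}\sum_{i,j} \frac{\partial f}{\partial W^{(l)}_{ij}}\,\Delta W^{(l)}_{ij},
\end{equation*}
that is, the Jacobian of $f$ with respect to all parameters applied to the stacked perturbation vector. The key point is that this expression is linear in $\Delta\mathcal{W}$: at first order, cross-layer interactions such as the change in $h^{(l-1)}$ caused by perturbing earlier layers enter only at second order and are discarded.

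Next, identify each summand with the single-weight quantity from \cref{theorem:score}. The pre-activation is $z^{(l)}=W^{(l)}h^{(l-1)}$ (a bias, if present, does not matter). By the chain rule, $\partial f/\partial W^{(l)}_{ij} = (\partial f/\partial z^{(l)}_i)\,(\partial z^{(l)}_i/\partial W^{(l)}_{ij}) = J^{(l)}_i\, h^{(l-1)}_j$. Hence each summand is the vector $J^{(l)}_i\,h^{(l-1)}_j\,\Delta W^{(l)}_{ij}$. Its norm is exactly $\|J^{(l)}_i\|_2\,|\Delta W^{(l)}_{ij}|\,|h^{(l-1)}_j|$, which is the expression in \cref{theorem:score} for a perturbation of that weight alone.

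Finally, take the $\ell_2$ norm of the linearized sum and apply the triangle inequality term by term:
\begin{equation*}
\|\Delta f\|_2 \approx \Big\|\sum_{l,i,j} J^{(l)}_i h^{(l-1)}_j \Delta W^{(l)}_{ij}\Big\|_2 \le \sum_{l,i,j}\|J^{(l)}_i\|_2\,|\Delta W^{(l)}_{ij}|\,|h^{(l-1)}_j|.
\end{equation*}
This is the claimed bound. The symbol $\lessapprox$ records that the equality holds only up to the neglected higher-order Taylor terms, $O(\|\Delta\mathcal{W}\|^2)$.

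The main obstacle is justifying rigorously that the global first-order expansion decomposes into the per-layer local quantities, with each $J^{(l)}_i$ and $h^{(l-1)}$ evaluated at the unperturbed parameters. I would handle this by invoking differentiability of $f$ in all parameters jointly, so that the total differential equals the sum of partial differentials. I would state explicitly that the approximation is taken at a common base point. If weights are shared across tokens, so that $h^{(l-1)}$ is a matrix over sequence positions, I would first sum the per-token contributions and apply the triangle inequality once more.
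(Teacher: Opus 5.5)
Your proposal is correct and follows the same route as the paper: expand $\Delta f$ by the total differential over all weights, factor $\partial f/\partial W^{(l)}_{ij} = J^{(l)}_i h^{(l-1)}_j$ via the chain rule, and apply the triangle inequality to the linearized sum. Your remark that all $J^{(l)}_i$ and $h^{(l-1)}$ are evaluated at a common unperturbed base point, with cross-layer effects pushed to second order, states explicitly an assumption the paper leaves implicit.
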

The corresponding proof for this extension is deferred to \cref{app:multiWeight}.

Based on the linear approximation provided in \cref{theorem:score}, we define the importance of each parameter. 
While the theorem quantifies the shift $\|\Delta f\|_2$ for an arbitrary perturbation $\Delta W$, we are specifically interested in the functional contribution of the current weight magnitude to the model's output stability. 
Following \cref{theorem:score}, we define a sensitivity-based importance score $S^{(l)}_{ij}$ by substituting the potential perturbation with the current weight magnitude $|W^{(l)}_{ij}|$:

\begin{equation}
\label{eq:score}
    S^{(l)}_{ij} = \|J^{(l)}_i\|_2 \cdot |W^{(l)}_{ij}| \cdot |h^{(l-1)}_j|.
\end{equation}

We interpret $S^{(l)}_{ij}$ as a first-order proxy for the functional impact induced by the maximal local perturbation of $W^{(l)}_{ij}$.
Intuitively, the score $S^{(l)}_{ij}$ quantifies the functional importance of a weight by considering the entire path of information flow in three dimensions:

\textbf{Output Sensitivity ($\|J_i\|_2$)} reflects the downstream impact, measuring the gradient-based sensitivity of the output to the $i$-th node.

\textbf{Connection Strength ($|W_{ij}|$)} represents the inherent scale of the parameter. As we interpret this as a proxy for $\Delta W_{ij}$, it captures the potential magnitude of the perturbation.

\textbf{Input Activity ($|h_j|$)} measures the magnitude of the incoming signal from the preceding layer. It quantifies the connection's potential impact from input.

This formulation effectively integrates the gradient-based sensitivity of structural flow methods with the activation-awareness of local pruning, while providing robustness against the non-linear scaling of modern MLLM architectures.

\subsection{Data-free Importance Estimation via Synthetic Probing}
\label{subsec:estimation}

To compute the importance defined in \cref{eq:score}, we estimate each parameter’s functional sensitivity by probing the model’s response with synthetically generated inputs. This strategy avoids reliance on original pretraining data and enables scalable evaluation for large MLLMs.

\paragraph{Stochastic Sensitivity Estimation} To capture the functional importance of each neuron, we utilize the L2 norm of the Jacobian $J$ of model $f$, which measures how much a perturbation at a specific node propagates to the final representation. However, computing the full Jacobian matrix of an MLLM is computationally expensive.

To calculate this efficiently in high-dimensional MLLMs, we utilize the Hutchinson Trace Estimator \cite{Hutchinson}. By projecting the output with a random Rademacher vector $\xi \in \{\pm 1\}^{d_{\text{final}}}$, the backward gain is stochastically estimated via the squared gradient:
\begin{equation}
    \label{eq:hutchinson_final}
    \mathbb{E}_{\xi}\left[ \left( \frac{\partial(\xi^\top f)}{\partial z_i}\right)^2 \right] = \|J_i\|_2^2.
\end{equation}
This allows us to obtain the node-wise output sensitivity of all parameters with a minimal number of backward passes, bypassing the need for explicit Jacobian construction. A discussion on the numerical stability for practice is in~\cref{app:L1Ext}.

\paragraph{Synthetic Probing and Monte Carlo Estimation} 
Finally, we address the challenge of data unavailability by performing the estimation with a synthetic probe. To enable data-free probing, we leverage MLLM’s generative capability to synthesize $N$ prompts from random seeds, which serve as model-generated prompts to probe its functional response. The final importance score $\bar{S}_{ij}^{(l)}$ is computed as a Monte Carlo (MC) estimator, averaging the product of forward and backward gains over these $N$ stochastic trials:
\begin{equation}
    \label{eq:final_mc_estimator}
    \bar{S}_{ij}^{(l)} = \frac{1}{N} \sum_{n=1}^{N} \|J_{i,n}^{(l)}\|_2 \cdot |W_{ij}^{(l)}|\cdot |h_{j,n}^{(l-1)}|,
\end{equation}
where $J^{(l)}_{i,n}$ and $h^{(l)}_{j,n}$ are the Jacobian and activation values measured during the $n$-th trial.
This averaging process serves as a variance reduction mechanism, ensuring that the identified functionally important parameters are robust across diverse activation patterns.

\paragraph{Synthetic Prompt Generation for Data-Free probing}
In practice, pretraining data are often unavailable because they are in-house. To estimate parameter importance without access to such datasets, we leverage MLLMs' generative capabilities for data-free probing.
Concretely, we sample random tokens from the model's tokenizer vocabulary and use them as seeds to synthesize $N$ prompts $\{\hat{x}_n\}_{n=1}^N$ as: 
\begin{equation}
    \hat{x}_n = f(\epsilon; \theta_{\text{pre}}),
\end{equation}
where $\epsilon$ is sampled token vector, and $\theta_{\text{pretrain}}$ is pretrained weight of a MLLM model.
By propagating these synthetic probes through the model, we induce a wide range of activation patterns that reflect the model’s learned functional structure, without relying on any task-specific or domain-specific data.
This procedure enables us to identify structurally and functionally important parameters in a data-free manner. More detailed discussion and validation are provided in~\cref{app:stability_estimation}.

\paragraph{Computational Complexity} The total procedure requires $O(N \cdot R)$ forward and backward passes, where $N$ is the number of MC synthetic samples and $R$ is the number of Rademacher vectors used for Hutchinson estimation per sample. Given that $N$ and $R$ are typically small (e.g., $N, R \ll d_{\text{final}}$), our method is significantly more efficient than explicit Jacobian computation, which would require $d_{\text{final}}$ backward passes.

\subsection{Sparse Finetuning}
\label{subsec:sparse_finetuning}
Since catastrophic forgetting in MLLMs primarily manifests as functional drift of pretrained representations, preserving parameters with high functional sensitivity naturally mitigates such degradation.
Once the importance scores $\bar{S}_{ij}^{(l)}$ are estimated for all parameters, we perform sparse finetuning to adapt the MLLM to new tasks while mitigating catastrophic forgetting. This process involves two main steps: (1) identifying a set of functionally important parameters through binary masking and (2) performing gradient updates only on the non-essential parameters. To identify the most influential parameters for the pretrained weights, we rank all weights within each layer by their importance scores, $\bar{S}_{ij}^{(l)}$. Given an update ratio $\rho \in [0, 1]$, we generate a binary mask $M^{(l)}_{ij}$ as follows:
\begin{equation}   
    \label{eq:masking}
    M^{(l)}_{ij} =\begin{cases}1, & \text{if } \bar{S}_{ij}^{(l)} \text{ is in the bottom } \rho \text{ percentile,} \\
    0, & \text{otherwise.}
    \end{cases}
\end{equation}
During adaptation to a new task, we preserve knowledge encoded in functionally important weights by freezing them. The gradient updates are restricted to the remaining parameters (where $M^{(l)}_{ij} = 1$), allowing the model to learn task-specific features without shifting the established representation. The sparsely updated parameter $\theta^*$ during the finetuning is formulated as:
\begin{equation}
    \label{eq:sparse_update}
    \theta^*=\theta-\lambda\cdot\left( M \odot\frac{\partial\mathcal{L}}{\partial\theta}\right),
\end{equation}
where $\theta$ denotes the weight of an MLLM model, $\lambda$ is learning rate, $\mathcal{L}$ is training objective, and $\odot$ represents the Hadamard product.

\paragraph{Justification} By freezing parameters with high $\bar{S}_{ij}^{(l)}$, we effectively suppress the dominant contributors to output perturbation, as approximated by the first-order sensitivity model in \cref{theorem:score}. Since our scoring mechanism accounts for the nonlinearity of non-homogeneous activations via the Hutchinson-estimated Jacobian, this sparse update strategy ensures that the functional anchors of the MLLM remain, providing greater stability than simple magnitude-based freezing methods.
\section{Experiments}
\label{sec:exp}

\subsection{Experiment Settings}
\label{sec:exp_setting}
\paragraph{Datasets and Architectures.} 
To evaluate the effectiveness of our method, we fine-tune MLLMs on a diverse set of downstream tasks, including image captioning, image classification, and visual question answering (VQA). 
Specifically, we use COCO-Caption \cite{lin2014microsoft} and Flickr30k \cite{young-etal-2014-image} for image captioning, ImageNet-R \cite{hendrycks2021many} for image classification, and IconQA \cite{lu2021iconqa} for VQA.
Experiments are conducted on two representative MLLM architectures, the widely used LLaVA \cite{liu2024improved} and the recently proposed NVILA models \cite{liu2025nvila}. 
To assess generalization and forgetting, we follow prior benchmark protocols \cite{zhu2024model, huang2025learn} and evaluate zero-shot performance on upstream (pretrained) tasks using TextVQA \cite{singh2019towards}, OKVQA \cite{marino2019ok}, OCRVQA \cite{mishra2019ocr}, and GQA \cite{hudson2019gqa}. 
In addition, we include MMBench in both English (MMB) and Chinese (MMB {\scriptsize(CN)}) \cite{liu2024mmbench} to measure multilingual zero-shot visual question answering capabilities.
Please refer to \cref{app:detailsDataset} for more details on each dataset.

\paragraph{Baselines.}
We compare \ours against several strong baselines designed to mitigate catastrophic forgetting. 
For methods specialized for MLLMs, we include ModelTailor (or \textbf{Tailor}) \cite{zhu2024model} and \textbf{SPIDER} \cite{huang2025learn}. 
ModelTailor is a post-merging approach that fuses pretrained weights with fine-tuned weights on downstream tasks, whereas SPIDER achieves state-of-the-art results in preventing forgetting for MLLMs by selectively updating model parameters based on accumulated gradient information during training.
Given the architectural similarity between Large Language Models (LLMs) and Multimodal Large Language Models (MLLMs), we additionally consider representative forgetting-mitigation methods originally proposed for LLMs, including Model Grafting (\textbf{Grafting}) \cite{panigrahi2023task} and Drop \& Rescale (\textbf{DARE}) \cite{yu2024language}.
Grafting and DARE are post-merging methods similar to ModelTailor.
Finally, we report results from full fine-tuning (\textbf{Full-FT}) as a reference baseline.
We provide more details on selected baselines in \cref{app:detailsBaseline}.

\paragraph{Training Settings.}
We follow the resource settings from \cite{zhou2024empirical} and randomly sample 10k training instances from each downstream dataset. 
For all experiments, we use a learning rate of $2\times10^{-5}$ and train for 5 epochs. 
Unless otherwise specified, all remaining hyperparameters follow the official implementations of LLaVA\footnote{\url{https://github.com/haotian-liu/LLaVA}} and NVILA\footnote{\url{https://github.com/NVlabs/VILA}}.
Due to computational constraints, we evaluate LLaVA using the LLaVA-1.5-7B model and NVILA using NVILA-Lite-2B. 
During fine-tuning, we update the last $L$ layers of the language decoder while freezing all other components. 
To investigate the effect of fine-tuning depth, $L$ is varied in increments of 4, ranging from 4 to 32 for LLaVA and from 4 to 28 for NVILA. 
All experiments are conducted on 8 NVIDIA A100 GPUs (40 GB) with a total batch size of 128. 
For memory-intensive baselines such as Grafting and SPIDER, we instead use NVIDIA H200 GPUs (143 GB). SPIDER cannot be trained on LLaVA-1.5-7B when $L > 20$ due to its memory complexity.
To calculate the weight importance score in \ours, we set the number of synthetic text samples $N=64$ and the number of Rademacher vectors $R=8$ for all experiments; more details of these selections are in~\cref{app:stability_estimation}.

\paragraph{Evaluation Metrics.}
For upstream (pretrained) tasks, we strictly follow the evaluation protocols of \cite{luo2024feast} and report the average accuracy across all upstream benchmarks, denoted as $A_{up}$, to measure the model's retained generalization ability. 
For downstream tasks, we adopt the CIDEr metric \cite{vedantam2015cider} for image captioning datasets (COCO-Caption and Flickr30k), and Exact Match(EM) accuracy for ImageNet-R and IconQA, denoted as $A_{down}$.
We evaluate the effectiveness of forgetting mitigation using both the arithmetic mean (Avg) and harmonic mean (H-score) of $A_{up}$ and $A_{down}$:
\begin{equation}
\text{Avg} = \frac{A_{up} + A_{down}}{2};
\text{H-score} = \frac{2\cdot A_{up}\cdot A_{down}}
{A_{up} + A_{down}}.
\end{equation}
Higher scores indicate better overall performance in mitigating catastrophic forgetting.

\subsection{Findings}
\label{exp:finding}

\begin{table*}[tbh]
    \setlength{\tabcolsep}{0.75mm} 
    \caption{Performance comparison on COCO-Caption, ImageNet-R, Flickr30k, and IconQA using \textbf{\textit{NVILA-Lite 2B}}, where only the last 20 layers of the language decoder are fine-tuned with an update ratio of $\rho=0.1$. \textbf{Bold} and \underline{underlined} entries represent the best and second-best results.}
    \label{tab:main_res_nvlia}
    \centering
    \small
    \resizebox{\linewidth}{!}{%
        \begin{tabular}{l|cccccc|c|cc||cccccc|c|cc}
        \toprule
        \multirow{3}{*}{{\textbf{Method}}} & \multicolumn{9}{c||}{\cellcolor{lightgrey}\textbf{COCO-Caption}} & \multicolumn{9}{c}{\cellcolor{lightgrey}\textbf{ImageNet-R}} \\
        \cmidrule{2-19}
         & {TextVQA} & {OKVQA} & {OCRVQA} & {MMB} & {MMB\scriptsize{(CN)}} & {GQA} & {$A_{down}$} & {Avg \scriptsize{$\uparrow$}} & {H-Score \scriptsize{$\uparrow$}} & {TextVQA} & {OKVQA} & {OCRVQA} & {MMB} & {MMB\scriptsize{(CN)}} & {GQA} & {$A_{down}$} & {Avg  \scriptsize{$\uparrow$}} & {H-Score \scriptsize{$\uparrow$}} \\ \midrule

        Zeroshot & 70.8 & 51.2 & 69.4 & 78.1 & 42.2 & 62.3 & 26.1 & 44.2 & 36.8 & 70.8 & 51.2 & 69.4 & 78.1 & 42.2 & 62.3 & 22.2 & 42.2 & 32.7 \\  \midrule
        Full-FT  & 8.1 & 11.5 & 65.1 & 25.1 & 15.5 & 24.0 & 98.5 & 61.7 & 39.7 & 16.0 & 10.8 & 13.3 & 43.0 & 33.1 & 14.2 & \underline{92.3} & 57.0 & 35.2 \\
        Grafting & 19.8 & 19.1 & 66.1 & 26.8 & 17.0 & 38.7 & \underline{115.7} & 73.5 & 49.2 & 20.8 & 15.9 & 14.7 & 51.4 & 38.6 & 15.0 & 90.0 & 58.0 & 40.4 \\
        DARE     & 8.0 & 11.3 & 64.1 & 24.7 & 14.2 & 24.9 & 96.8 & 60.6 & 39.1 & 15.6 & 10.4 & 12.1 & 43.3 & 34.4 & 13.3 & \textbf{92.4} & 56.9 & 34.9 \\
        Tailor   & 7.7 & 12.5 & 64.2 & 40.7 & 25.9 & 18.9 & 105.6 & 67.0 & 44.7 & 28.8 & 18.7 & 23.9 & 60.2 & 42.5 & 26.2 & 80.3 & 66.9 & 47.2 \\
        SPIDER   & 65.3 & 42.3 & 67.8 & 72.0 & 48.4 & 59.6 & 115.4 & \underline{87.3} & \underline{78.3} & 42.5 & 26.1 & 54.3 & 68.0 & 38.9 & 40.9 & 91.9 & \underline{68.5} & \underline{60.5} \\  \midrule
        Dowser   & 69.3 & 48.6 & 68.8 & 77.7 & 50.8 & 60.7 & \textbf{135.5} & \textbf{99.1} & \textbf{85.7} & 64.4 & 45.7 & 68.6 & 75.7 & 38.8 & 59.4 & 90.3 & \textbf{74.5} & \textbf{71.2} \\ \midrule
        \midrule

         & \multicolumn{9}{c||}{\cellcolor{lightgrey}\textbf{Flickr30k}} & \multicolumn{9}{c}{\cellcolor{lightgrey}\textbf{IconQA}} \\\midrule

        Zeroshot & 70.8 & 51.2 & 69.4 & 78.1 & 42.2 & 62.3 & 27.3 & 44.8 & 38.0 & 70.8 & 51.2 & 69.4 & 78.1 & 42.2 & 62.3 & 0.1 & 31.2 & 0.2 \\ \midrule
        Full-FT  & 48.6 & 31.0 & 65.2 & 54.2 & 31.8 & 51.7 & 64.3 & 55.7 & 54.3 & 0.0 & 0.1 & 1.1 & 78.0 & 59.3 & 0.0 & \textbf{95.0} & 59.0 & 37.1 \\
        Grafting & 60.3 & 37.9 & 64.9 & 61.9 & 39.3 & 55.8 & 76.8 & 65.1 & 63.0 & 0.0 & 0.0 & 0.2 & 77.3 & 59.6 & 0.0 & \underline{94.5} & 58.7 & 36.8 \\
        DARE     & 48.5 & 30.6 & 64.9 & 52.7 & 30.2 & 50.9 & 63.9 & 55.1 & 53.7 & 0.0 & 0.2 & 0.9 & 77.9 & 59.3 & 0.0 & \textbf{95.0} & 59.0 & 37.1\\
        Tailor   & 49.4 & 33.4 & 66.6 & 66.2 & 40.0 & 52.6 & 74.1 & 62.7 & 60.7 & 44.6 & 42.8 & 66.9 & 77.2 & 59.5 & 53.0 & \textbf{95.0} & \underline{76.2} & \underline{71.5} \\       
        SPIDER   & 68.0 & 44.5 & 67.4 & 75.0 & 50.7 & 59.4 & \underline{78.5} & \underline{69.7} & \underline{68.5} &13.0& 30.8 & 34.2 & 78.7 & 60.7 & 23.7 & 93.7 & 66.9 & 56.2 \\ \midrule
        Dowser   & 70.1 & 48.6 & 68.0 & 77.9 & 51.0 & 60.3 & \textbf{96.0} & \underline{79.3} & \textbf{75.8} & 71.2 & 50.7 & 69.0 & 77.8 & 57.7 & 62.2 & 92.0 & \textbf{78.4} & \textbf{76.0} \\ \bottomrule
        
        \end{tabular}%
    }
\end{table*}

\begin{table*}[tbh]
    \setlength{\tabcolsep}{0.75mm} 
    \caption{Performance comparison on COCO-Caption and ImageNet-R using \textbf{\textit{LLaVA-1.5-7B}}, where only the last 20 layers of the language decoder are fine-tuned with an update ratio of $\rho=0.1$. \textbf{Bold} and \underline{underlined} entries represent the best and second-best results.}
    \label{tab:main_res_llava}
    \centering
    \small
    \resizebox{\linewidth}{!}{%
        \begin{tabular}{l|cccccc|c|cc||cccccc|c|cc}
        \toprule
        \multirow{3}{*}{{\textbf{Method}}} & \multicolumn{9}{c||}{\cellcolor{lightgrey}\textbf{COCO-Caption}} & \multicolumn{9}{c}{\cellcolor{lightgrey}\textbf{ImageNet-R}} \\
        \cmidrule{2-19}
         & {TextVQA} & {OKVQA} & {OCRVQA} & {MMB} & {MMB\scriptsize{(CN)}} & {GQA} & {$A_{down}$} & {Avg \scriptsize{$\uparrow$}} & {H-Score \scriptsize{$\uparrow$}} & {TextVQA} & {OKVQA} & {OCRVQA} & {MMB} & {MMB\scriptsize{(CN)}} & {GQA} & {$A_{down}$} & {Avg  \scriptsize{$\uparrow$}} & {H-Score \scriptsize{$\uparrow$}} \\ \midrule
        Zeroshot & 58.3 & 58.0 & 65.1 & 64.6 & 58.2 & 62.0 & 40.3 & 50.7 & 48.5 & 58.3 & 58.0 & 65.1 & 64.6 & 58.2 & 62.0 & 16.3 & 38.7 & 25.8 \\  \midrule
        Full-FT  & 52.7 & 48.8 & 63.0 & 63.9 & 58.0 & 57.5 & 100.0 & 78.7 & 72.9 & 24.3 & 10.2 & 26.2 & 48.6 & 45.0 & 20.6 & \underline{89.7} & 59.4 & 44.0 \\
        Grafting & 57.2 & 55.9 & 64.1 & 64.8 & 58.5 & 59.8 & 81.6 & 70.8 & 69.2 & 57.7 & 55.8 & 65.7 & 64.6 & 58.8 & 61.4 & 67.3 & 64.0 & 63.8 \\
        DARE     & 52.2 & 48.1 & 62.4 & 63.7 & 58.1 & 57.1 & 99.1 & 78.0 & 72.3 & 23.8 & 9.9 & 25.6 & 47.3 & 44.8 & 16.7 & \textbf{89.8} & 58.9 & 42.7 \\
        Tailor   & 55.0 & 50.9 & 64.5 & 64.8 & 58.4 & 58.8 & \underline{106.7} & \underline{82.7} & \underline{75.8} & 38.1 & 18.2 & 40.1 & 61.4 & 54.8 & 36.8 & 84.3 & 62.9 & 55.7 \\
        SPIDER   & 56.1 & 53.2 & 64.7 & 63.9 & 57.5 & 59.8 & 103.1 & 81.1 & 75.2 & 46.8 & 27.4 & 55.6 & 63.0 & 54.8 & 43.0 & 89.3 & \underline{68.9} & \underline{62.8} \\  \midrule
        Dowser   & 57.0 & 54.3 & 65.0 & 62.9 & 55.0 & 60.3 & \textbf{123.6} & \textbf{91.3} & \textbf{79.9} & 54.6 & 48.4 & 64.8 & 64.4 & 55.8 & 56.5 & 88.8 & \textbf{73.1} & \textbf{69.7} \\ \midrule\midrule

         & \multicolumn{9}{c||}{\cellcolor{lightgrey}\textbf{Flickr30k}} & \multicolumn{9}{c}{\cellcolor{lightgrey}\textbf{IconQA}} \\\midrule
        
        Zeroshot & 58.3 & 58.0 & 65.1 & 64.6 & 58.2 & 62.0 & 25.3 & 43.1 & 35.7 & 58.3 & 58.0 & 65.1 & 64.6 & 58.2 & 62.0 & 0.1 & 30.6 & 0.3 \\ \midrule
        Full-FT  & 50.6 & 51.5 & 62.3 & 64.3 & 54.5 & 57.6 & 67.3 & 62.1 & 61.6 & 50.6 & 53.9 & 56.6 & 59.1 & 50.5 & 57.6 & 44.4 & 49.5 & 49.0 \\
        Grafting & 56.9 & 55.9 & 64.1 & 64.9 & 58.2 & 60.4 & 64.0 & 62.0 & 62.0 & 58.1 & 58.2 & 65.1 & 63.4 & 56.3 & 61.8 & 22.4 & 41.4 & 32.7 \\
        DARE     & 55.3 & 51.4 & 62.8 & 64.5 & 54.6 & 58.3 & 66.3 & 62.1 & 61.8 & 50.4 & 53.8 & 56.2 & 59.1 & 50.5 & 57.0 & 44.5 & 49.5 & 49.0 \\
        Tailor   & 56.1 & 52.1 & 63.4 & 65.0 & 56.4 & 59.1 & \underline{78.0} & \underline{68.3} & \underline{67.0} & 54.7 & 56.2 & 65.1 & 58.3 & 49.5 & 61.1 & 29.6 & 43.5 & 39.1 \\
        SPIDER   & 56.6 & 54.2 & 64.3 & 63.9 & 55.2 & 59.4 & 71.0 & 65.0 & 64.4 & 54.8 & 55.0 & 61.2 & 61.7 & 52.1 & 60.0 & \underline{44.8} & \underline{51.1} & \underline{50.4} \\ \midrule
        Dowser   & 56.5 & 55.2 & 63.3 & 63.8 & 55.1 & 59.6 & \textbf{85.0} & \textbf{72.0} & \textbf{69.6} & 57.7 & 58.5 & 65.3 & 63.0 & 53.9 & 61.8 & \textbf{44.9} & \textbf{52.4} & \textbf{51.3} \\ \bottomrule
        \end{tabular}%
    }
\end{table*}

\paragraph{Robust Balance between Upstream Knowledge and Downstream Adaptation.}
Across all benchmarks, we observe a general trend: while different fine-tuning strategies often achieve comparable downstream performance, their ability to preserve upstream zero-shot knowledge varies.
As shown in \cref{tab:main_res_nvlia,tab:main_res_llava}, \ours achieves the highest H-scores among all evaluated methods by preserving pretrained knowledge during sparse fine-tuning.
This trend holds across architectures. For instance, with  NVILA-Lite-2B, \ours achieves H-scores of 85.7 and 71.2 on COCO-Caption and ImageNet-R, respectively, and also maintains strong upstream performance on LLaVA-1.5-7B while achieving H-scores of 79.9 and 69.7 on COCO-Caption and ImageNet-R, respectively. Importantly, these results suggest that catastrophic forgetting is less a consequence of insufficient downstream adaptation and more a result of failure to preserve functionally sensitive parameters.
By selectively protecting parameters with high functional impact and updating those with minimal effect on the output function, \ours preserves the model’s core representational capacity while retaining sufficient plasticity for downstream alignment, resulting in robust and balanced performance across models and tasks.
We provide further details of experimental results in \cref{app:moreExp}

\begin{figure}[tb]
    \centering
    \includegraphics[width=0.99\linewidth]{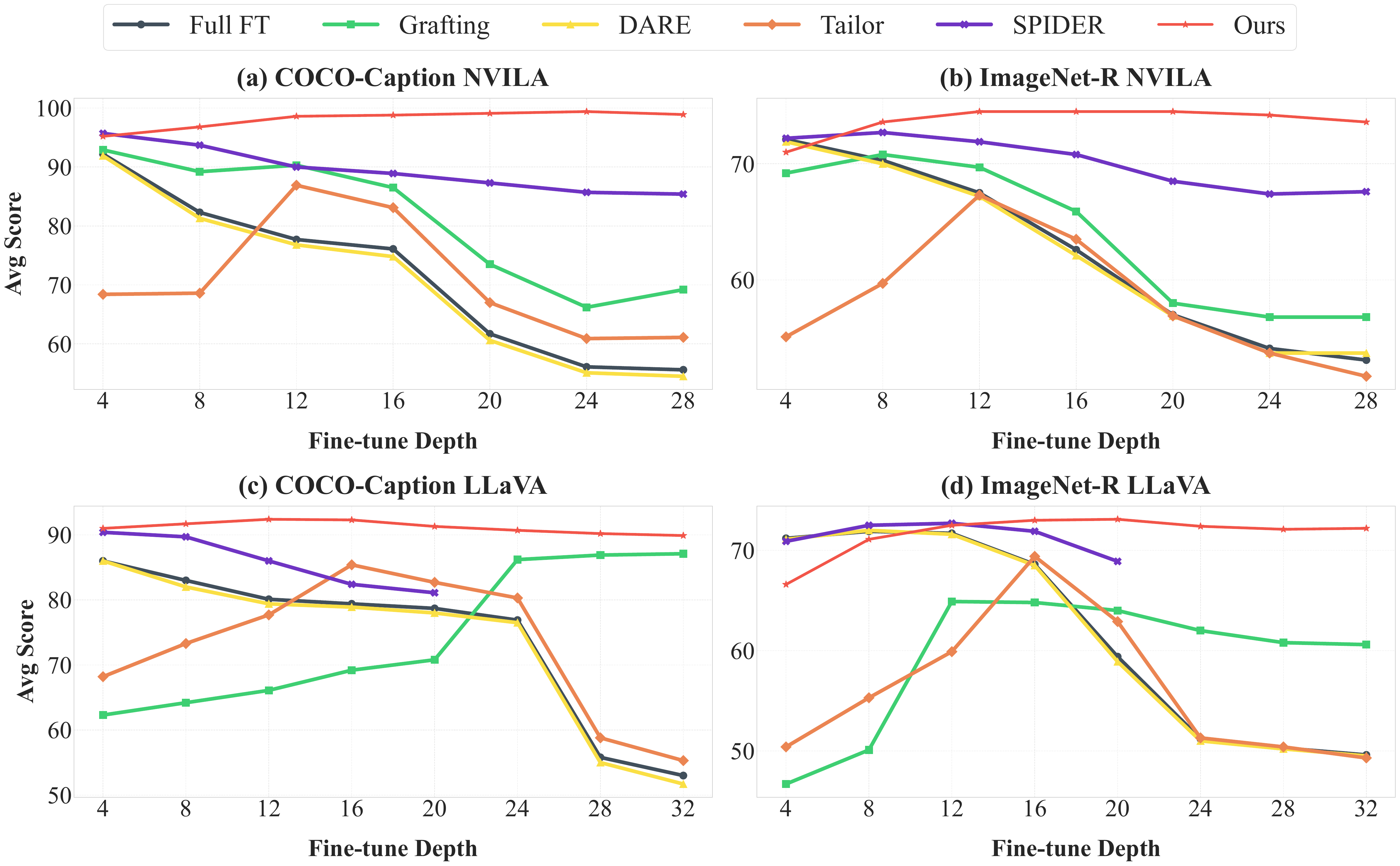}
    \caption{Performance comparison across fine-tuning depths on COCO and ImageNet-R. Results show the average accuracy across all tasks for an update ratio of $\rho=0.1$ and various merging methods. The x-axis denotes the number of layers fine-tuned, counted incrementally from the final output layer toward the initial input layer. }
    \label{fig:layerdAcc}
\end{figure}

\paragraph{Increased Vulnerability of Early Decoder Layers to Catastrophic Forgetting.} Layer-wise analysis shows that catastrophic forgetting is most severe when fine-tuning extends to early layers, as shown in \cref{fig:layerdAcc}. Grafting, DARE, and Tailor maintain stability across the last 4--16 layers but collapse when updates are applied to early layers because these post-merging methods are inherently reactive; patching weights to recover the pretrained task is insufficient after disruption by fine-tuning. In contrast, SPIDER can preserve pretrained knowledge across even more layers by dynamically selecting parameters based on gradient history. However, even with all 32 layers, SPIDER underperforms \ours, indicating that it struggles to identify important parameters for preserving pretraining knowledge in the early layers. Conversely, \ours preserves sensitive functional anchors across layers and updates only the others, sustaining stability throughout the tuning process.

\begin{table}[tb]
    \caption{Memory complexity for fine-tuning and parameter update ratios across different methods. $|P|$ denotes the number of learnable parameters. `\# param' denotes the number of parameters on \textit{NVILA/LLaVA} when updating the last 20 layers, respectively.}
    \label{tab:mem_complex}
    \setlength{\tabcolsep}{0.65mm}
    \begin{center}
        \begin{small}
        \begin{tabular*}{\linewidth}{@{\extracolsep{\fill}}l|ccc}
        \toprule
        Method & Memory Complexity & $\rho$ & \# param \\ \midrule
        Full FT & $\mathcal{O}(|P|)$ & 100\% &  1.4B / 4.5B  \\
        Grafting & $\mathcal{O}(2|P|)$ & 100\% &  143M / 438M  \\
        DARE & $\mathcal{O}(|P|)$ & 10\% &  143M / 438M  \\
        Tailor & $\mathcal{O}(|P|)$ & 10\% &   143M / 438M \\
        SPIDER & $\mathcal{O}(3|P|)$ & 50\% & 714M / 2.3B \\
        Dowser {\scriptsize (Ours)}  & $\mathcal{O}(|P|)$ & 10\% &  143M / 438M \\ 
        
        \bottomrule
        \end{tabular*}
        \end{small}
    \end{center}
\end{table}

\paragraph{Memory Efficiency and Scalability.}
\ours provides a highly memory-efficient alternative to existing catastrophic forgetting mitigation strategies, maintaining the same complexity as standard fine-tuning. As shown in \cref{tab:mem_complex}, while SPIDER and Grafting require significant memory overheads of $\mathcal{O}(3|P|)$ and $\mathcal{O}(2|P|)$ respectively, \ours operates with a minimal memory complexity of $\mathcal{O}(|P|)$. This efficiency comes from \ours, which uses a static binary mask derived from a one-time calculation before fine-tuning; unlike SPIDER, which must store and update accumulated gradient histories during training, our approach uses pre-computed importance scores to generate a fixed mask prior to fine-tuning. Because the cost of storing a binary mask is negligible, \ours can scale to large foundation models.

\begin{figure}[t]
    \centering
    \includegraphics[width=0.99\linewidth]{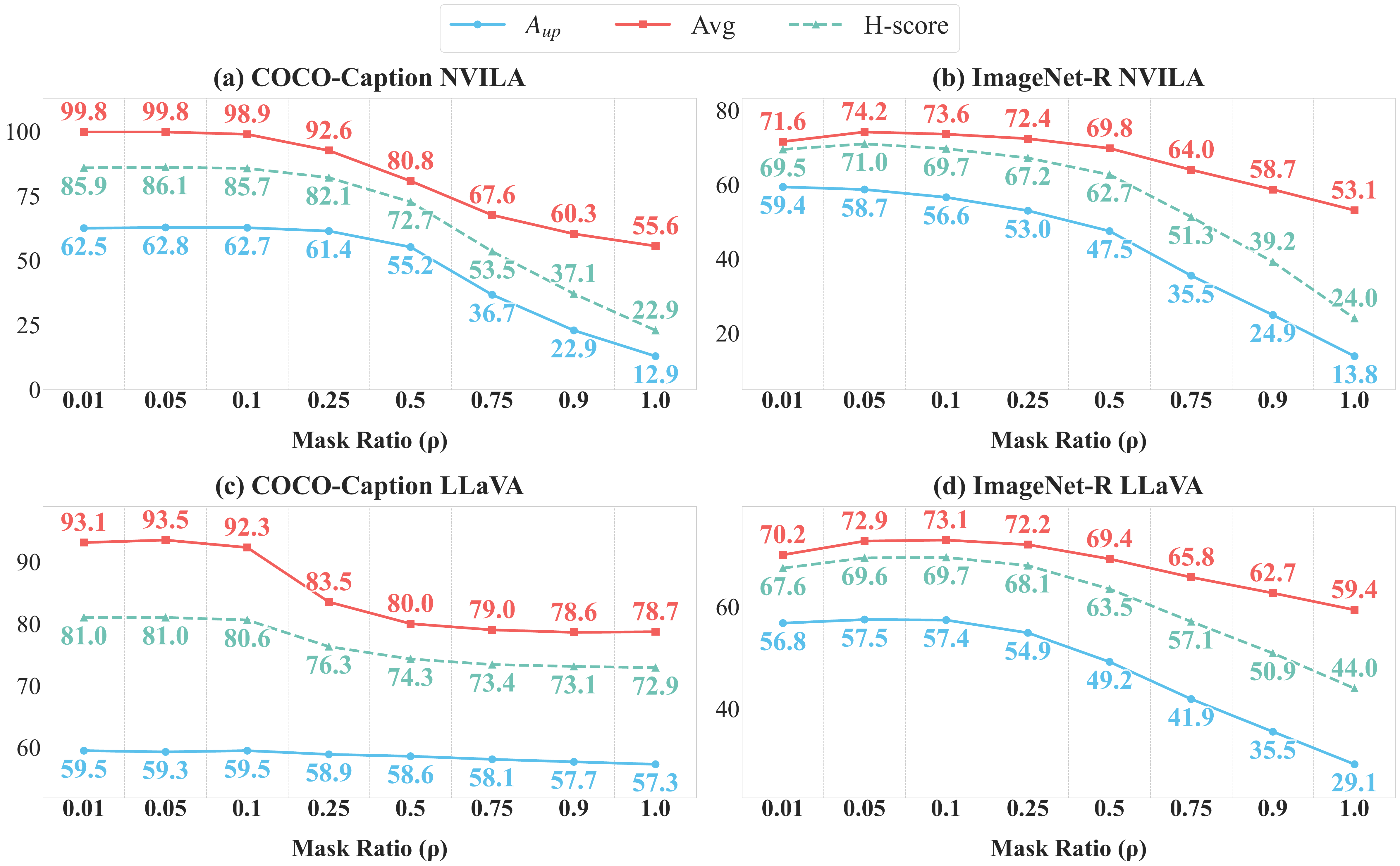}
    \caption{Performance comparison across various mask ratios ($\rho$) on COCO-Caption and ImageNet-R using \textbf{\textit{(a-b) NVILA-Lite-2b}}, and \textbf{\textit{(c-d) LLaVA-1.5-7B}}. Results show the upstream and downstream performance ($A_{up}$, Avg, H-score).}
    \label{fig:ratio}
\end{figure}

\paragraph{Robustness to Update Ratios and the Stability.}
\ours demonstrates a wide operational window regarding the update ratio $\rho$, effectively preserving stability even as the number of trainable parameters increases. 
In \cref{fig:ratio}, the average upstream performance remains stable for mask ratios up to $\rho=0.25$, maintaining the upstream knowledge of the pretrained model. 
While a gradual degradation in upstream accuracy is observed as $\rho$ increases, \ours consistently outperforms the Full-FT ($\rho=1.0$) across all evaluated settings. 
This empirical result suggests that the functional importance identified by our sensitivity scoring is highly concentrated; as long as the most critical parameters are protected, the model remains robust to significant task-specific updates in less sensitive regions. 
These results confirm that \ours provides a reliable mechanism for mitigating catastrophic forgetting without requiring exhaustive hyperparameter tuning for the mask ratio $\rho$.

\paragraph{Comparing with Random Selection.}
As shown in \cref{tab:maskRatio}, \ours consistently outperforms the random selection \cite{pmlr-v235-xu24ag,hui2025hft}. The performance gap becomes most substantial at $\rho=0.5$.
Specifically, at $\rho=0.5$, \ours achieves an Avg score of 69.8 and an H-score of 62.7, showing better performance over the random selection baseline ($65.7 \pm 0.4$ and $55.0 \pm 0.8$), respectively.
Both gains are statistically significant ($p = 0.003$ for Avg and $p = 0.004$ for H-score under two-tailed t-tests).

The high variance observed under random selection reflects the instability of random updates, in which functionally sensitive parameters could be unintentionally modified.
In contrast, by identifying these sensitive parameters, \ours ensures stability even when a substantial portion of the model is being updated.
These results demonstrate that as the number of updated parameters increases, the reliability provided by importance-based parameter selection becomes essential for effectively mitigating forgetting.
Further details are provided in \cref{app:moreExp,app:stability_estimation}.

\begin{table}[t]
    \caption{Result on ImageNet-R with different update ratios ($\rho$) across 28 layers on \textbf{\textit{NVILA-Lite-2B}}. Random selection vs ours. Numbers after $\pm$ indicate standard deviation across three random seeds.}
    \label{tab:maskRatio}
    \begin{center}
    \resizebox{\linewidth}{!}{
        \begin{tabular}{ l|c|c|c|cc}
            \toprule
             Method & Ratio ($\rho$) & $A_{up}$ & $A_{down}$ & Avg $\uparrow$ & H-score $\uparrow$ \\ \midrule
            Full FT & 1.0 & 13.8 & 92.3 & 53.1 & 24.0 \\ \midrule
            
            Random & \multirow{2}{*}{0.1} & 54.5 \scriptsize{$\pm$1.0} & \textbf{90.7} \scriptsize{$\pm$0.2} & 72.6 \scriptsize{$\pm$0.6} & 68.0 \scriptsize{$\pm$0.8} \\
            Ours &  & \textbf{56.6} & 90.5 & \textbf{73.6} & \textbf{69.7} \\ \midrule
            
             Random & \multirow{2}{*}{0.25} & 49.0 \scriptsize{$\pm$0.7} & 91.5 \scriptsize{$\pm$0.2} & 70.2\scriptsize{$\pm$0.3} & 63.8\scriptsize{$\pm$0.5}\\
            Ours  & & \textbf{53.0} & \textbf{91.7} & \textbf{72.4} & \textbf{67.2} \\ \midrule
            
             Random  & \multirow{2}{*}{0.5} & 39.2 \scriptsize{$\pm$0.9} & \textbf{92.2} \scriptsize{$\pm$0.2} & 65.7 \scriptsize{$\pm$0.4} & 55.0\scriptsize{$\pm$0.8} \\
            Ours  & & \textbf{47.5} & \textbf{92.2} & \textbf{69.8} & \textbf{62.7} \\ \midrule
            
             Random  & \multirow{2}{*}{0.75} & 25.6\scriptsize{$\pm$4.4} & \textbf{92.6} \scriptsize{$\pm$0.2} & 59.1\scriptsize{$\pm$2.3} & 40.1\scriptsize{$\pm$5.5} \\
            Ours  & & \textbf{35.5} & 92.5 & \textbf{64.0} & \textbf{51.3} \\ \midrule
            
             Random  & \multirow{2}{*}{0.9} & 20.4 \scriptsize{$\pm$1.4} & 92.6 \scriptsize{$\pm$0.2} & 56.5\scriptsize{$\pm$0.8} & 33.4\scriptsize{$\pm$1.9} \\
            Ours  & & \textbf{24.9} & \textbf{92.6} & \textbf{58.7} & \textbf{39.2} \\
            \bottomrule
        \end{tabular}
        }
    \end{center}
\end{table}

\begin{table}[tb]
    \setlength{\tabcolsep}{0.5mm}
    
    \centering
    \caption{\rev{\textbf{Comparison of parameter importance} Hamming distance and Spearman correlation relative to the score with real-data samples. Results are presented as `mean $\pm$ std' over 5 independent runs. $R$ and $N$ denote the number of Rademachers and Monte Carlo samples, respectively.}}
    \label{tab:stablity_compressed}
    \scriptsize
    \begin{tabular*}{\linewidth}{@{\extracolsep{\fill}}ccccccc}
    \toprule
    \multirow{2}{*}{\textbf{R}} & \multirow{2}{*}{\textbf{N}} & \multicolumn{2}{c}{\textbf{Hamming Distance} ($\downarrow$)} & & \multicolumn{2}{c}{\textbf{Spearman Correlation} ($\uparrow$)} \\
    \cmidrule{3-4} \cmidrule{5-7}
     & & Random & \textbf{Ours}  & & Random & \textbf{Ours} \\
    \midrule
    \multirow{2}{*}{2} 
     & 4  & 0.226 $\pm$ 0.000 & \textbf{0.064 $\pm$ 0.004}   & & 0.000 $\pm$ 0.000 & \textbf{0.861 $\pm$ 0.008}\\  
     & 64 & 0.226 $\pm$ 0.000 & \textbf{0.052 $\pm$ 0.002}  & & 0.000 $\pm$ 0.000 & \textbf{0.886 $\pm$ 0.004} \\
    \midrule
    \multirow{2}{*}{8} 
     & 4  & 0.226 $\pm$ 0.000 & \textbf{0.058 $\pm$ 0.003}  & & 0.000 $\pm$ 0.000 & \textbf{0.875 $\pm$ 0.007} \\ 
     & 64 & 0.226 $\pm$ 0.000 & \textbf{0.050 $\pm$ 0.002} & & 0.000 $\pm$ 0.000 & \textbf{0.891 $\pm$ 0.003} \\ 
    \bottomrule
    \end{tabular*}
\end{table}

\rev{
\paragraph{Stability of the Data-Free Estimation on the Importance Score}
We provide empirical evidence that our data-free importance estimation yields a stable, robust ranking of parameter sensitivity. We evaluate alignment with the real-data score from $N=64$ text queries randomly sampled from the upstream tasks with $R=8$ Rademacher vectors, using the Hamming distance \cite{Hamming1950error} and Spearman correlation \cite{spearman1904proof}. As shown in \cref{tab:stablity_compressed}, our estimated importance score closely aligns with real-data ranking as N increased, whereas random selection shows no alignment. With $R=8, N=64$, \ours achieves a Hamming distance of $0.050\pm0.002$ and Spearman correlation of $0.891\pm0.003$, indicating that our data-free estimation effectively captures the model's sensitivity.
We provide detailed analysis, including text-only probing in \cref{app:stability_estimation,app:text_only_probe}.
}



\section{Conclusion}
\label{sec:conclusion}

In this paper, we investigate catastrophic forgetting in MLLMs under varying fine-tuning depths. 
Our analysis reveals that fine-tuning earlier layers of the language decoder severely degrades pretrained generalization, and existing methods often fail and exhibit unstable performance. 
To address this challenge, we propose \ours, which identifies the parameters most impactful on the model’s outputs prior to downstream adaptation and selectively preserves them during fine-tuning. 
As a result, \ours effectively mitigates catastrophic forgetting while remaining data-free and resource-efficient, requiring no additional memory beyond standard fine-tuning and thus scaling well to large MLLMs. 

\rev{
\section*{Acknowledgement}
This work was supported by the National Research Foundation of Korea (NRF) grant funded by the Korea government (MSIT) (RS-2025-00573160), the IITP (Institute of Information \& Communications Technology Planning \& Evaluation)-ITRC (Information Technology Research Center) grant funded by the Korea government (Ministry of Science and ICT) (IITP-2026-RS-2023-00259703), and the ``Advanced GPU Utilization Support Program'' funded by the Government of the Republic of Korea (Ministry of Science and ICT).
}

\section*{Impact Statement}
This paper presents work whose goal is to advance the field of Machine Learning. There are many potential societal consequences of our work, none which we feel must be specifically highlighted here.


\bibliography{example_paper}
\bibliographystyle{icml2026}

\newpage
\appendix
\onecolumn
\icmltitle{Appendix of \ours: Data-Free Importance Probing to Mitigate Catastrophic Forgetting in MLLMs}
In this appendix, we provide a more detailed discussion, proof, and result from the main text.

\begin{itemize}
    \item \textbf{\Cref{app:more_related_work}}: An extended discussion on the related work from \Cref{sec:related_works}.
    \item \textbf{\Cref{app:score_proof}}: Proof of \Cref{theorem:score} (cf. \Cref{subsec:imp_score}).
    \item \textbf{\Cref{app:multiWeight}}: Proof of \Cref{cor:multi} (cf. \Cref{subsec:imp_score}).
    \item \textbf{\Cref{app:L1Ext}}: $L_1$-norm Surrogate and corresponding proof.
    \item \textbf{\Cref{app:detailSetting}}: Detailed setting of experiments and definition of metrics (cf. \Cref{sec:exp_setting}).
    \item \textbf{\Cref{app:moreExp}}: Detailed experiments on various benchmarks and further analysis on the proposed importance score.
    \item \textbf{\Cref{app:lr_sweep}}: Effect of learning rate on upstream retention and downstream adaptation across methods.
    \item \textbf{\Cref{app:comparison_random_select}}: Comparison with random selection and importance-score-based mask.
    \item \textbf{\Cref{app:stability_estimation}}: Stability analysis on the proposed data-free estimation method.
    \item \textbf{\Cref{app:text_only_probe}}: Validation of text-only probing by comparing probing with image input.
    \item \textbf{\Cref{app:jacobian_abl}}: Ablation on Jacobian ($\|J\|_2$) term
    \item \textbf{\Cref{app:cl_exp}}: Additional experiment on Continual Learning setting.
\end{itemize}

\section{More discussion of Related Work}
\label[appendix]{app:more_related_work}

Catastrophic forgetting is a long-standing challenge in deep neural networks. Early studies primarily investigated this phenomenon in unimodal models under continual learning settings, where a model is sequentially trained on new tasks, and its performance on previously learned tasks degrades \cite{goodfellow2013empirical, masana2022class, yang2023neural, kirkpatrick2017overcoming, xuhong2018explicit, aljundi2018memory, zhang2024gradient}. These works typically assume a single-source task constraint, in which successive tasks are closely related, e.g., image classification, and the source-task training data remains accessible during adaptation.

In contrast, the emergence of foundation models \cite{radford2021learning, kirillov2023segment, zhai2023sigmoid} introduces a fundamentally different form of forgetting. In this setting, the original pretraining data distribution is often unknown or unavailable, and catastrophic forgetting manifests as a loss of generalization after fine-tuning on a downstream task. Recent efforts to mitigate forgetting in foundation models have largely focused on gradient-based or full-model fine-tuning methods \cite{zhang2024overcoming, zheng2023preventing, xiang2023language}; however, such approaches are often impractical for large-scale pretrained models due to their computational and memory overhead.

As Multimodal Large Language Models (MLLMs) \cite{liu2024improved, liu2025nvila, wang2024qwen2} continue to demonstrate strong adaptability across diverse downstream tasks, understanding and mitigating catastrophic forgetting in these models has become an increasingly important research direction.

\clearpage
\section{Functional shift for single-weight perturbation (Proof of \Cref{theorem:score})}
\label[appendix]{app:score_proof}

\begin{theorem}[\Cref{theorem:score}]
\label{theorem:score_app}
    Consider a layer $l$ in an MLLM model $f$. Under first-order Taylor approximation, the L2 norm of output shift $\Delta f$ when perturbing a weight $W^{(l)}_{ij}$ is given by:
    \begin{equation}
        \|\Delta f\|_2 \approx \|J^{(l)}_i\|_2 \cdot |\Delta W^{(l)}_{ij}| \cdot |h^{(l-1)}_j|,
    \end{equation}
    where $J^{(l)}_i = \partial f / \partial z^{(l)}_i$ denotes the $i$-th column of the Jacobian matrix of the network output with respect to the pre-activation vector $z^{(l)}$, and $h^{(l-1)}$ is the input activation of the $l$-th layer (output of layer $l-1$).
\end{theorem}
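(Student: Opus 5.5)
The plan is a direct chain-rule computation. I would model layer $l$ as affine-then-nonlinear, $z^{(l)} = W^{(l)} h^{(l-1)} + b^{(l)}$ and $h^{(l)} = \sigma(z^{(l)})$, and regard the network output as a function of this pre-activation: $f = g^{(l)}(z^{(l)})$, where $g^{(l)}$ collects all downstream computation. The perturbation $\Delta W^{(l)}_{ij}$ leaves everything upstream of layer $l$ unchanged, so $h^{(l-1)}$ is fixed. Its only direct effect is on $z^{(l)}$, and because $z^{(l)}$ is linear in $W^{(l)}$ the effect is exact: $\Delta z^{(l)} = \Delta W^{(l)}_{ij}\, h^{(l-1)}_j\, e_i$, where $e_i$ is the $i$-th standard basis vector. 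So perturbing one weight moves a single pre-activation coordinate, by the amount $\Delta W^{(l)}_{ij} h^{(l-1)}_j$.

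Next I would Taylor-expand $g^{(l)}$ to first order around the current $z^{(l)}$:
\[
\Delta f = g^{(l)}(z^{(l)} + \Delta z^{(l)}) - g^{(l)}(z^{(l)}) = \frac{\partial f}{\partial z^{(l)}}\,\Delta z^{(l)} + O(\|\Delta z^{(l)}\|^2).
\]
Substituting $\Delta z^{(l)}$, the Jacobian $\partial f/\partial z^{(l)}$ multiplies $e_i$, which picks out its $i$-th column $J^{(l)}_i$. This gives $\Delta f \approx J^{(l)}_i\, \Delta W^{(l)}_{ij}\, h^{(l-1)}_j$. Taking the Euclidean norm and using homogeneity with the two scalar factors yields $\|\Delta f\|_2 \approx \|J^{(l)}_i\|_2\, |\Delta W^{(l)}_{ij}|\, |h^{(l-1)}_j|$, which is the claim. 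Since the perturbation direction is a single coordinate, this is an equality at first order, not merely a bound.

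The main obstacle is not computational. It is making sure the setup matches real MLLM layers, so that $z^{(l)}$ truly enters $f$ only through $g^{(l)}$ and depends on $W^{(l)}$ only through $W^{(l)} h^{(l-1)}$. Residual connections, attention and normalization do not break this, provided $g^{(l)}$ is defined as the map from $z^{(l)}$ to the output with all other inputs (including residual streams from earlier layers) held fixed. Weight-sharing across token positions is the one case that needs care: I would handle it either by restricting to a single token position, or by noting that summing over positions gives the same form with $J$ and $h$ taken per token.

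The remainder term is controlled by differentiability of $g^{(l)}$, which holds for GELU, SiLU and GLU nonlinearities. Its size is $O(|\Delta W^{(l)}_{ij} h^{(l-1)}_j|^2)$, which is what justifies the ``$\approx$'' in the statement.
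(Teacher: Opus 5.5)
Your proposal is correct and matches the paper's proof step for step. The single-weight perturbation changes only $z^{(l)}_i$, by $\Delta W^{(l)}_{ij} h^{(l-1)}_j$; the first-order Taylor expansion of the downstream map then selects the column $J^{(l)}_i$; and homogeneity of the norm gives the result. One caution on your weight-sharing remark: across token positions the first-order shift is $\Delta W^{(l)}_{ij}\sum_t h^{(l-1)}_{j,t} J^{(l)}_{i,t}$, whose norm is only bounded by the sum of per-token products, not equal to it. So your ``equality at first order'' holds for a single position, which is the setting the paper's proof implicitly assumes.
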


\begin{proof}
In this section, we provide the detailed derivation of the parameter importance score $S^{(l)}_{ij}$ based on the first-order Taylor approximation of the model's output shift.

\paragraph{Setup and Notation}
Consider a specific linear layer $l$ within an MLLM $f$. For a given input stimulus, let $h^{(l-1)} \in \mathbb{R}^{d_{\text{in}}}$ denote the input activation (output from the preceding layer) and $W^{(l)} \in \mathbb{R}^{d_{\text{out}} \times d_{\text{in}}}$ denote the weight matrix of layer $l$. The pre-activation vector $z^{(l)}$ is defined as $z^{(l)} = W^{(l)} h^{(l-1)}$, where the $i$-th component is $z^{(l)}_i = \sum_j W^{(l)}_{ij} h^{(l-1)}_j$. The network output $f$ can be viewed as a function of the pre-activation $z^{(l)}$, denoted as $f = F(z^{(l)})$, where $F$ encompasses all subsequent operations in the network. We define the output sensitivity vector (Jacobian column) $J^{(l)}_i$ as:
\begin{equation}
    \label{eq:jacobian}
      J^{(l)}_i = \frac{\partial f}{\partial z^{(l)}_i} \in \mathbb{R}^{d_{\text{final}}}.
\end{equation}

\paragraph{Derivation for Single Weight Perturbation}

We analyze the effect of perturbing a single weight element $W^{(l)}_{ij}$ by an amount $\Delta W^{(l)}_{ij}$.

\textbf{Step 1:} Since only the weight element $W^{(l)}_{ij}$ is perturbed, only the $i$-th component of the pre-activation vector $z^{(l)}$ is affected:
\begin{equation}
    z'^{(l)}_i = \sum_{k} (W^{(l)}_{ik} + \delta_{kj} \Delta W^{(l)}_{ij}) h^{(l-1)}_k = z^{(l)}_i + \Delta W^{(l)}_{ij} h^{(l-1)}_j,
\end{equation}
where $\delta$ is the Kronecker delta. Thus, the change in the $i$-th pre-activation is $\Delta z^{(l)}_i = h^{(l-1)}_j \Delta W^{(l)}_{ij}$, while $\Delta z^{(l)}_k = 0$ for all $k \neq i$.

\textbf{Step 2:} To quantify the functional impact of a parameter perturbation, we treat the final network output $f$ as a differentiable function of the pre-activation $z^{(l)}$. When the pre-activation vector is perturbed from $z^{(l)}$ to $z^{(l)} + \Delta z^{(l)}$, we can approximate the new output using a first-order Taylor expansion around $z^{(l)}$:
\begin{equation}
    F(z^{(l)} + \Delta z^{(l)}) = F(z^{(l)}) + \frac{\partial F}{\partial z^{(l)}} \Delta z^{(l)} + \mathcal{O}(\|\Delta z^{(l)}\|^2)
\end{equation}
By neglecting the higher-order terms $\mathcal{O}(\|\Delta z^{(l)}\|^2)$ and rearranging the equation to isolate the difference between the perturbed and original outputs, we define the output shift $\Delta f$ as follows:
\begin{equation}
    \Delta f = F(z^{(l)} + \Delta z^{(l)}) - F(z^{(l)}) \approx \frac{\partial F}{\partial z^{(l)}} \Delta z^{(l)} = \sum_{k=1}^{d_{\text{out}}} J^{(l)}_k \Delta z^{(l)}_k,
\end{equation}
where $\frac{\partial F}{\partial z^{(l)}} = [J^{(l)}_1, J^{(l)}_2, \dots, J^{(l)}_{d_{\text{out}}}]$ represents the Jacobian matrix of the network output with respect to the pre-activations. As established in Step 1, since the perturbation is restricted to a single weight $W^{(l)}_{ij}$, the change vector $\Delta z^{(l)}$ is sparse, containing a non-zero value only at the $i$-th index ($\Delta z^{(l)}_i = \Delta W^{(l)}_{ij} \cdot h^{(l-1)}_j$). Consequently, the summation collapses to a single term:
\begin{equation}
    \Delta f \approx J^{(l)}_i \Delta z^{(l)}_i = J^{(l)}_i \cdot (\Delta W^{(l)}_{ij} \cdot h^{(l-1)}_j).
\end{equation}

\textbf{Step 3:} Taking the $L_2$ norm of both sides, we obtain the magnitude of the functional shift:
\begin{equation}
    \|\Delta f\|_2 \approx \|J^{(l)}_i \cdot (\Delta W^{(l)}_{ij} \cdot h^{(l-1)}_j)\|_2 = \|J^{(l)}_i\|_2 \cdot |\Delta W^{(l)}_{ij}| \cdot |h^{(l-1)}_j|.
\end{equation}
By substituting the potential perturbation $\Delta W^{(l)}_{ij}$ with the existing weight magnitude $|W^{(l)}_{ij}|$, we arrive at the importance score $S^{(l)}_{ij} = \|J^{(l)}_i\|_2 \cdot |W^{(l)}_{ij}| \cdot |h^{(l-1)}_j|$. We emphasize that this substitution does not aim to predict the exact output shift, but serves as a conservative first-order surrogate for ranking parameters by their relative functional sensitivity.

The proof of \Cref{theorem:score} is finished.
\end{proof}

\section{Extension to Multi-layer and Multi-weight Pruning (Proof of \Cref{cor:multi})}
\label[appendix]{app:multiWeight}

\begin{corollary}[\Cref{cor:multi}]
\label{cor:multi_app}
    Let $\Delta \mathcal{W} = \{\Delta W^{(l)}\}_{l=1}^L$ be the set of perturbation matrices for all layers in the model $f$. Under the first-order Taylor approximation, the total output shift $\|\Delta f\|_2$ is bounded by the global aggregate of individual parameter sensitivities as follows: 
    \begin{equation}
            \|\Delta f\|_2 \lessapprox \sum_{l,i,j} \|J^{(l)}_i\|_2 \cdot |\Delta W^{(l)}_{ij}| \cdot |h^{(l-1)}_j|.
    \end{equation}
\end{corollary}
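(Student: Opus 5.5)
The plan is to linearize the whole network in all weight perturbations at once and then apply the triangle inequality. View the output $f$ as a function of the full collection of weights $\mathcal{W}=\{W^{(l)}\}_{l=1}^L$ (with the input held fixed). Under the first-order Taylor approximation around the current weights, the output shift is
\begin{equation}
    \Delta f \approx \sum_{l=1}^{L}\sum_{i,j} \frac{\partial f}{\partial W^{(l)}_{ij}}\,\Delta W^{(l)}_{ij},
\end{equation}
with the second-order and cross terms, which are $\mathcal{O}(\|\Delta\mathcal{W}\|^2)$, dropped. The key observation is that each partial derivative $\partial f/\partial W^{(l)}_{ij}$ is exactly the quantity computed in the proof of \cref{theorem:score}. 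Indeed, $W^{(l)}_{ij}$ enters $f$ only through $z^{(l)}_i=\sum_k W^{(l)}_{ik}h^{(l-1)}_k$, so by the chain rule $\partial f/\partial W^{(l)}_{ij} = J^{(l)}_i\, h^{(l-1)}_j$, where $h^{(l-1)}$ and $J^{(l)}_i$ are evaluated at the unperturbed weights. Substituting gives
\begin{equation}
    \Delta f \approx \sum_{l,i,j} J^{(l)}_i\, \Delta W^{(l)}_{ij}\, h^{(l-1)}_j .
\end{equation}
That is, the first-order shift is the superposition of the single-weight shifts from \cref{theorem:score}.

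Next I take $L_2$ norms and apply the triangle inequality term by term. Each term has norm $\|J^{(l)}_i\|_2\,|\Delta W^{(l)}_{ij}|\,|h^{(l-1)}_j|$, which yields the claimed bound. The symbol $\lessapprox$ records that the inequality is exact for the linearized shift and only approximate for the true $\Delta f$, up to the neglected higher-order terms.

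The main obstacle is the step where the per-layer derivatives are combined. When weights in several layers move together, the input activation $h^{(l-1)}$ to layer $l$ itself changes because earlier layers were perturbed. One must argue that this effect is second order, since it multiplies $\Delta W^{(l)}$ by a change in $h^{(l-1)}$ that is itself $\mathcal{O}(\Delta W^{(l')})$ for $l'<l$. I would handle it by noting that the joint first-order expansion in all parameters is just the total differential. The total differential is linear in $\Delta\mathcal{W}$, and each of its coefficients is a single partial derivative taken with all other weights held at their base values. Any interaction between perturbations in different layers therefore appears only in the Hessian cross terms, which the first-order approximation discards.

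I would also state clearly that the equality of the linearized shift to the sum is exact. The looseness of the bound comes purely from the triangle inequality, and equality holds when all the vectors $J^{(l)}_i\Delta W^{(l)}_{ij}h^{(l-1)}_j$ are positively aligned.
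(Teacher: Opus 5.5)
Your proposal is correct and follows the paper's proof essentially step for step. Both arguments take the total differential in all weights, reuse the chain-rule factorization $\partial f/\partial W^{(l)}_{ij} = J^{(l)}_i h^{(l-1)}_j$ from \cref{theorem:score}, and apply the triangle inequality to the $L_2$ norm. Your added remark is a useful clarification the paper leaves implicit: perturbing earlier layers changes $h^{(l-1)}$, but this cross-layer effect only enters through the discarded second-order terms.
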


\begin{proof}
We derive this using the concept of the total differential, treating the neural network function $f$ as differentiable with respect to the set of parameters $\mathcal{W} = \{W^{(l)}_{ij}\}_{l,i,j}$.

According to the definition of the total differential, the variation in the output $\Delta f$ induced by simultaneous perturbations in all weights can be approximated by the sum of partial derivatives with respect to every weight parameter:
\begin{equation}
    \label{eq:total_diff}
    \Delta f \approx \sum_{l=1}^{L} \sum_{i,j} \frac{\partial f}{\partial W^{(l)}_{ij}} \Delta W^{(l)}_{ij}.
\end{equation}

From the derivation in \Cref{theorem:score} (\Cref{theorem:score_app}), we established that the gradient of $f$ with respect to a specific weight $W^{(l)}_{ij}$ factorizes via the chain rule into the downstream sensitivity and the input activation:
\begin{equation}
    \frac{\partial f}{\partial W^{(l)}_{ij}}=\frac{\partial f}{\partial z^{(l)}_i}\cdot\frac{\partial z^{(l)}_i}{\partial W_{ij}^{(l)}} = J^{(l)}_i \cdot h^{(l-1)}_j.
\end{equation}

Substituting this result directly into \Cref{eq:total_diff} yields:
\begin{equation}
    \Delta f \approx \sum_{l=1}^{L} \sum_{i,j} \left( J^{(l)}_i \cdot h^{(l-1)}_j \right) \Delta W^{(l)}_{ij}.
\end{equation}

Finally, to bound the magnitude of the total shift, we apply the triangle inequality ($\|\sum \mathbf{x}\| \le \sum \|\mathbf{x}\|$) and the property of scalar multiplication:
\begin{equation}
    \|\Delta f\|_2 \approx \left\| \sum_{l,i,j} J^{(l)}_i h^{(l-1)}_j \Delta W^{(l)}_{ij} \right\|_2 \le \sum_{l,i,j} \|J^{(l)}_i\|_2 \cdot |h^{(l-1)}_j| \cdot |\Delta W^{(l)}_{ij}|.
\end{equation}
\begin{equation}
    \therefore  \|\Delta f\|_2 \lessapprox \sum_{l,i,j} \|J^{(l)}_i\|_2 \cdot |\Delta W^{(l)}_{ij}| \cdot |h^{(l-1)}_j|.
\end{equation}
This inequality demonstrates that the sum of our proposed importance scores, $S^{(l)}_{ij}$, serves as a theoretical upper bound on the total functional shift of the MLLM under first-order approximation. Consequently, minimizing this aggregate score via global parameter selection effectively preserves the pretrained model's established functional response under perturbations. 

The proof of \Cref{cor:multi} is finished.
\end{proof}

\clearpage

\section{$L_1$-norm Surrogate for Scalable Parameter Importance Estimation}
\label[appendix]{app:L1Ext}

\begin{theorem}[]
\label{theorem:L1ext}
    Consider a MLLM $f \in \mathbb{R}^{d_{\text{final}}}$ and a Rademacher random variable $\xi \in \{-1, 1\}^{d_{\text{final}}}$. Under the sensitivity measure defined in \Cref{theorem:score}, the following inequality holds:
    \begin{equation}
        \sqrt{\tfrac{1}{2}}\,\|J_i\|_2  \cdot |\Delta W_{ij}| \cdot |h_j| \;\le\; \mathbb{E}_{\xi}[|\xi^\top J_i|] \cdot |\Delta W_{ij}|\cdot |h_j|  \;\le\; \|J_i\|_2  \cdot |\Delta W_{ij}| \cdot |h_j|,
    \end{equation}
    where $J_i = \partial f / \partial z_i$ denotes the $i$-th column vector of the Jacobian matrix, $h$ is the activation, and $W$ is the weight matrix of the model $f$.
\end{theorem}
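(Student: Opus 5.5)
The plan is to reduce the statement to a scalar inequality about Rademacher sums and then multiply through by the common nonnegative factor $|\Delta W_{ij}|\cdot|h_j|$. Fix $a := J_i \in \mathbb{R}^{d_{\text{final}}}$ and write $X := \xi^\top a = \sum_k \xi_k a_k$. Since $|\Delta W_{ij}|\,|h_j| \ge 0$, it suffices to prove
\begin{equation*}
\sqrt{\tfrac12}\,\|a\|_2 \;\le\; \mathbb{E}_\xi|X| \;\le\; \|a\|_2 .
\end{equation*}
The identity $\mathbb{E}_\xi[X^2] = \|a\|_2^2$ is exactly the Hutchinson identity already used in \cref{eq:hutchinson_final}. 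It follows from $\mathbb{E}[\xi_k\xi_l]=\delta_{kl}$.

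For the upper bound, I would apply Jensen's inequality (equivalently Cauchy--Schwarz): $\mathbb{E}|X| \le \sqrt{\mathbb{E}[X^2]} = \|a\|_2$.

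The lower bound is the main obstacle. It is a Khintchine inequality with the sharp constant $1/\sqrt2$, which is Szarek's theorem. I would first try the cheap route: a Hölder interpolation $\mathbb{E}[X^2] \le (\mathbb{E}|X|)^{2/3}(\mathbb{E}X^4)^{1/3}$, combined with the direct computation
\begin{equation*}
\mathbb{E}X^4 = 3\|a\|_2^4 - 2\sum_k a_k^4 \le 3\|a\|_2^4 .
\end{equation*}
This yields $\mathbb{E}|X| \ge \|a\|_2/\sqrt3$. That is a valid bound of the same form but with a weaker constant, so I would note it as a fallback.

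To get the stated constant $\sqrt{1/2}$, I would invoke Szarek's sharp Khintchine inequality $\mathbb{E}|\sum_k \xi_k a_k| \ge \tfrac{1}{\sqrt2}\|a\|_2$, citing it rather than reproving it. The equality case $a=(1,1,0,\dots)/\sqrt2$ shows the constant is tight. If a self-contained argument were demanded, I would instead follow a known short proof of the sharp bound, for example Haagerup's Fourier-integral method or Latała--Oleszkiewicz. This is the hard step.

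Finally, I would multiply both sides of the scalar inequality by $|\Delta W_{ij}|\,|h_j|$ to obtain the displayed chain. I would then remark that $\mathbb{E}_\xi|\xi^\top J_i|$, which is the quantity estimated by a single backward pass per Rademacher draw, is therefore within a factor $\sqrt2$ of the $L_2$ sensitivity in \cref{theorem:score}. Because the constant is uniform over all $i$ and $j$, it can distort the induced parameter ranking by at most that factor.
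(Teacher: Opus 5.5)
Your proposal is correct and follows the paper's proof essentially step for step: the Hutchinson identity $\mathbb{E}_\xi[(\xi^\top J_i)^2]=\|J_i\|_2^2$, Jensen's inequality for the upper bound, the sharp $p=1$ Khintchine inequality with constant $1/\sqrt{2}$ (which the paper cites via Khintchine and Haagerup) for the lower bound, and finally multiplication by the nonnegative factor $|\Delta W_{ij}|\,|h_j|$. Your H\"older fallback giving the weaker constant $1/\sqrt{3}$ and your tightness example $a=(1,1,0,\dots)/\sqrt{2}$ are both correct, and they are additions the paper does not include.
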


\begin{proof} 
First, we recall that the $L_2$-norm of the Jacobian matrix can be estimated using the Hutchinson trace estimator. For a given component or vector $J$, the relationship is established as:
\begin{equation}
   \| J_i \|_2^2 = \mathbb{E}_{\xi}\left[ \left( \frac{\partial (\xi^\top f)}{\partial z_i} \right)^2 \right] = \mathbb{E}_{\xi}[(\xi^\top J_i)^2].
\end{equation}

To relate the expectation of the absolute value to the $L_2$-norm, we first apply Jensen's inequality for the concave function $\sqrt{\cdot}$, which yields:
\begin{equation}
    \mathbb{E}_{\xi}[|\xi^\top J_i|] \le \sqrt{\mathbb{E}_{\xi}[(\xi^\top J_i)^2]} = \sqrt{\|J_i\|_2^2}.
\end{equation}
This provides the upper bound $\mathbb{E}_{\xi}[|\xi^\top J_i|] \le \|J_i\|_2$. 

Furthermore, to establish the lower bound, we utilize the Khintchine inequality \cite{khintchine1923dyadische, haagerup1981best}. For the specific case of $p=1$ and Rademacher complexity, the inequality states:
\begin{equation}
    \sqrt{\tfrac{1}{2}}\,\|J_i\|_2 \;\le\; \mathbb{E}_{\xi}[|\xi^\top J_i|] \;\le\; \|J_i\|_2.
\end{equation}

This relationship demonstrates that the expectation $\mathbb{E}_{\xi}[|\xi^\top J_i|]$ is equivalent to the $L_2$-norm of the Jacobian up to a constant factor, i.e., $\mathbb{E}_{\xi}[|\xi^\top J_i|] \asymp \|J_i\|_2$. Given that $\|J_i\|_2 \le \|J_i\|_1$, this estimator serves as a robust proxy for the sensitivity of the model outputs. 

Finally, by multiplying all terms by the magnitude of the activation $|h_j|$ and the weight perturbation $|\Delta W_{ij}|$, we obtain the following inequality:
\begin{equation}
    \sqrt{\tfrac{1}{2}}\,\|J_i\|_2  \cdot |\Delta W_{ij}| \cdot |h_j|  \;\le\; \mathbb{E}_{\xi}[|\xi^\top J_i|]  \cdot |\Delta W_{ij}|\cdot |h_j| \;\le\; \|J_i\|_2 \cdot |\Delta W_{ij}|\cdot |h_j| .
\end{equation}
The proof of \Cref{theorem:L1ext} is finished.
\end{proof}

\paragraph{Numerical Stability near Zero.} 
A challenge in using the $ L_2$-norm with low-precision formats (e.g., FP16 or BF16) is the risk of underflow. When Jacobian elements $J_i$ are small, the squaring operation in $\|J\|_2$ can push values below the minimum representable threshold, causing them to be flushed to zero. This results in a loss of the relative-importance ranking among parameters. 

Our $L_1$-based estimator $\mathbb{E}_{\xi}[|\xi^\top J_i|]$ avoids this by preserving the original magnitude of the gradients, ensuring that even subtle sensitivity differences are captured without numerical vanishing. In this proof, we show that, in practice, the proposed estimator and the $L_2$ norm have equivalent growth rates. Specifically, since the estimator scales linearly with $\|J_i\|_2$, both methods can serve as effective surrogates for the parameter importance score without losing the relative ranking of sensitivity.

\clearpage
\section{Experiment setting details.}
\label[appendix]{app:detailSetting}

\subsection{Details of Datasets}
\label[appendix]{app:detailsDataset}

\textbf{TextVQA} \cite{singh2019towards} is a visual question answering benchmark that requires models to read and reason about text embedded in images to answer natural language questions. 
It evaluates a model’s ability to jointly perform scene text understanding and visual–language reasoning, highlighting limitations of standard VQA models that lack text-reading capability.

\textbf{OKVQA} \cite{marino2019ok} requires models to answer questions by combining visual understanding with external, commonsense, or factual knowledge beyond what is directly observable in the image. 
It is designed to evaluate a model’s ability to integrate vision with knowledge-based reasoning.

\textbf{OCR-VQA} \cite{mishra2019ocr} focuses on answering questions by reading and understanding text present in images. 
It evaluates a model’s ability to perform optical character recognition and associate recognized text with visual context to support accurate question answering.

\textbf{MMBench} \cite{liu2024mmbench} is a comprehensive multimodal benchmark designed to evaluate the general capabilities of multimodal models across diverse vision–language skills, including perception, reasoning, and knowledge understanding. 
It provides standardized multiple-choice questions split between English and Chinese, enabling systematic evaluation of zero-shot and multilingual performance in multimodal large language models.

\textbf{GQA} \cite{hudson2019gqa} is a visual question answering benchmark designed to evaluate real-world visual reasoning and compositional understanding. 
It features structured questions that require multi-step reasoning over objects, attributes, and relations in images, enabling fine-grained analysis of a model’s visual reasoning capabilities.

\textbf{COCO-Caption} \cite{lin2014microsoft} is a large-scale vision dataset containing images of complex everyday scenes with multiple objects annotated for object detection, segmentation, and image captioning. 
It is widely used to evaluate visual understanding and language generation capabilities, particularly in image captioning and multimodal learning tasks.

\textbf{Flickr30k} \cite{young-etal-2014-image} is an image–caption dataset consisting of 30,000 everyday images, each annotated with multiple human-written captions. 
It is commonly used to evaluate image captioning and vision–language understanding by assessing a model’s ability to generate descriptive, semantically accurate natural-language captions.

\textbf{IconQA} \cite{lu2021iconqa} is a visual question answering benchmark designed for abstract diagram understanding and visual–language reasoning. 
In this paper, we focus on the text-based multiple-choice split.

\textbf{ImageNet-R} \cite{hendrycks2021many} is a robustness benchmark derived from ImageNet that contains images rendered in diverse artistic styles, such as paintings, cartoons, and sketches. 
It is designed to evaluate a model’s out-of-distribution generalization by testing recognition performance under significant appearance shifts from natural images. 
In this paper, we adopt the visual question answering format of this dataset from \cite{guo-etal-2025-hide}.

\subsection{Details of Baselines}
\label[appendix]{app:detailsBaseline}

\textbf{Model Grafting} \cite{panigrahi2023task} optimizes the model's loss on the downstream task with respect to a merging mask, which is later used to fuse the pretrained model and the fine-tuned model. 
While additional mask training can maintain the merged model's performance on the target task, it cannot guarantee performance on pretrained tasks. 
Moreover, this training requires $2\times$ the memory compared to standard fine-tuning, thus limiting its applicability to large models. 

\textbf{DARE} \cite{yu2024language} aims to reduce the delta weights without additional training. 
It first randomly drops a proportion of delta weights, then rescales the remaining ones to eliminate redundant parameters. 
The rescaled delta weights are later added back to the pretrained models. Despite its simplicity, DARE shows suboptimal results and inconsistent performance when applied to MLLMs.

\textbf{ModelTailor} \cite{zhu2024model} is the first work that explores the catastrophic forgetting problem in MLLMs. 
ModelTailor proposed a post-training fusion strategy based on salience and sensitivity analysis. 
Although it is specifically designed for MLLMs, ModelTailor still suffers from severe forgetting of pretrained knowledge when fine-tuning a deeper proportion of language layers, similar to other post-merging methods.


\textbf{SPIDER} \cite{huang2025learn} is the state-of-the-art method for MLLMs, which adopts sparse-finetuning instead of post-merging strategies. 
By actively monitoring the accumulated parameter gradients and magnitudes, SPIDER ranks the parameter importance and selectively updates a subset of parameters. 
While this approach demonstrates better results compared to previous methods, it introduces substantial computational overhead and memory usage ($3\times$ compared to the standard fine-tuning).

\section{Further Experiments}
\label[appendix]{app:moreExp}
\subsection{Further Details on Experimental Result}

As shown in \cref{fig:depth_flick_icon},\ours differs from previous approaches by remaining stable even when fine-tuning is applied across the full model, while the baselines show a performance drop. This aligns with the analysis in \cref{sec:exp}, suggesting that our method precisely targets parameters crucial for balancing knowledge retention and task adaptation. Additionally, \cref{fig:mask_ratio_flick_icon} highlights the method's resilience to different mask ratios ($\rho$); specifically, the consistent $A_{up}$ scores across all settings attest to the efficacy of our importance score in preserving upstream capabilities.

\begin{figure}[htbp]
    \centering
    \includegraphics[width=0.7\linewidth]{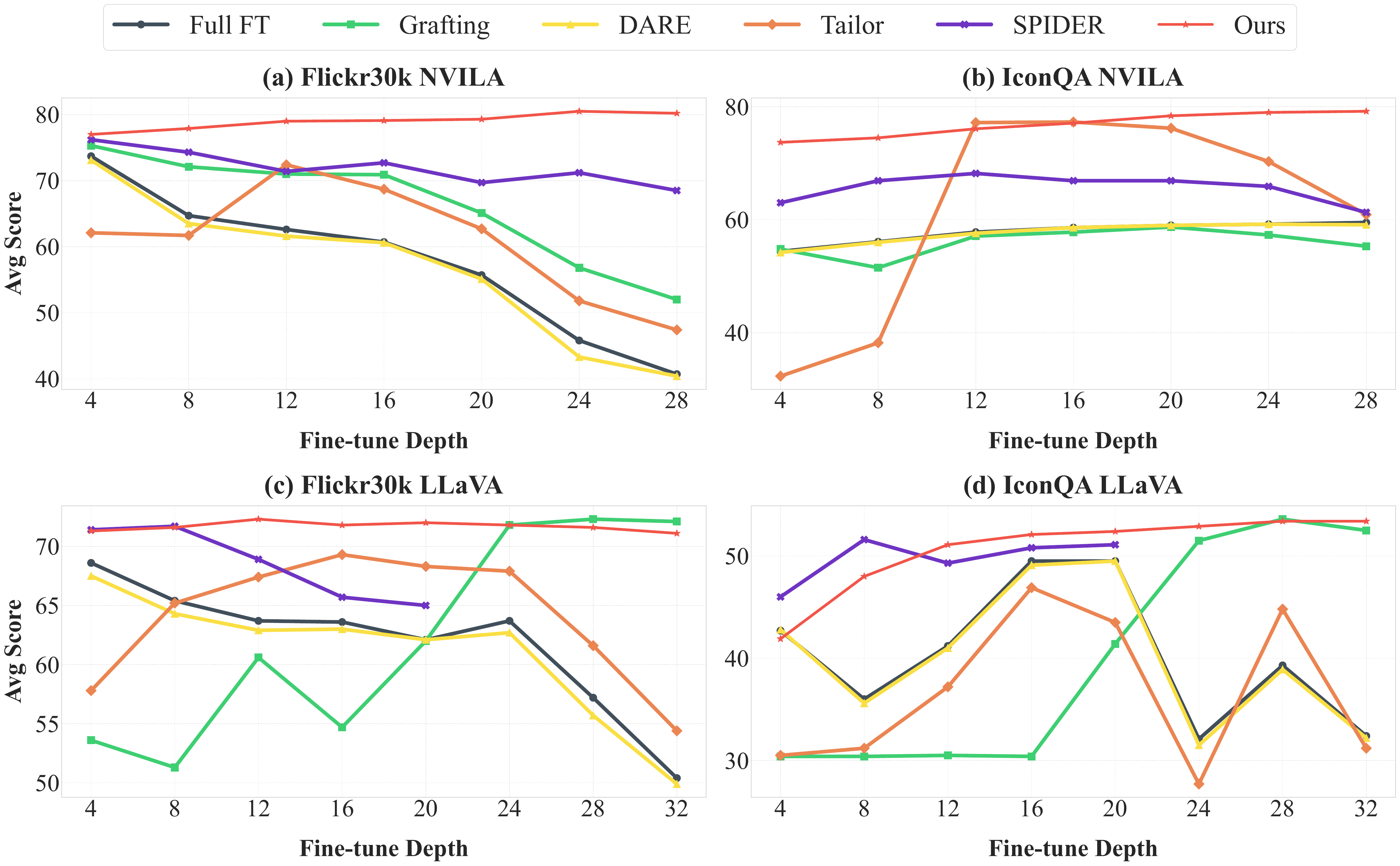}\\
    \caption{Performance comparison across fine-tuning depths on Flickr30k and IconQA. Results show the average accuracy across all tasks for an update ratio of $\rho=0.1$ and various merging methods. The x-axis denotes the number of layers fine-tuned, counted incrementally from the final output layer toward the initial input layer.}
    \label{fig:depth_flick_icon}
\end{figure}

\begin{figure}[htbp]
    \centering
    \includegraphics[width=0.8\linewidth]{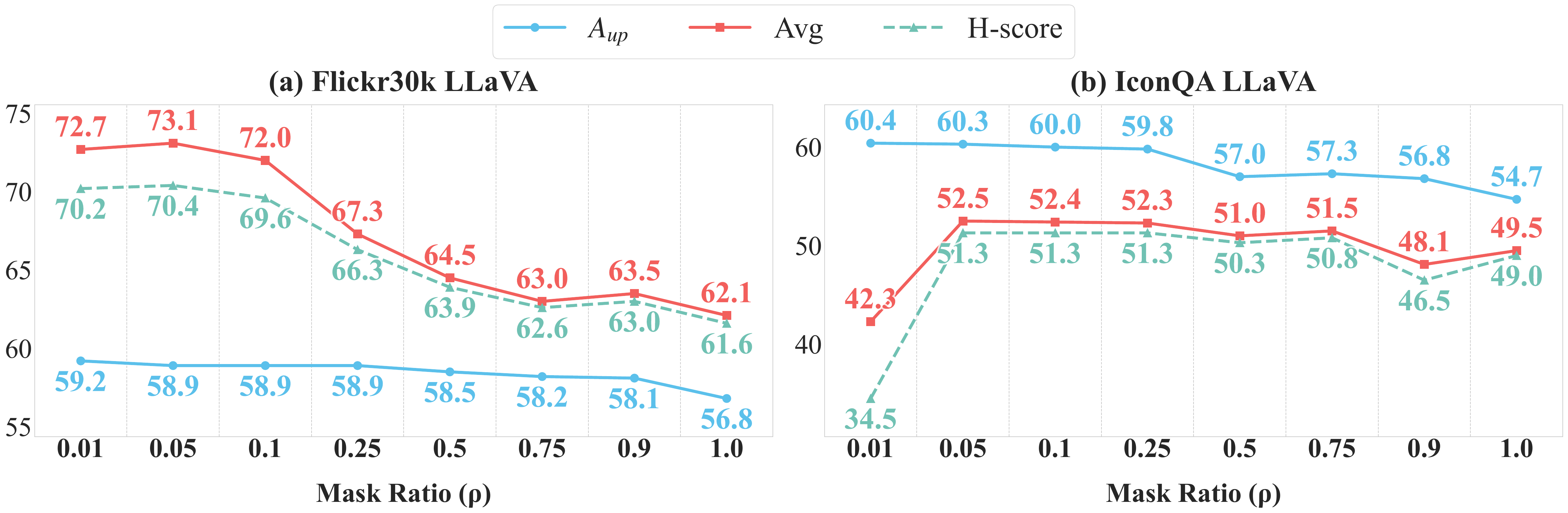}\\
    
    \caption{Performance comparison across various mask ratios ($\rho$) on Flickr30k and IconQA using LLaVA-1.5-7B. Results show the upstream and downstream performance.}
    \label{fig:mask_ratio_flick_icon}
\end{figure}

\begin{figure}[htb]
    \centering
    \includegraphics[width=0.6\linewidth]{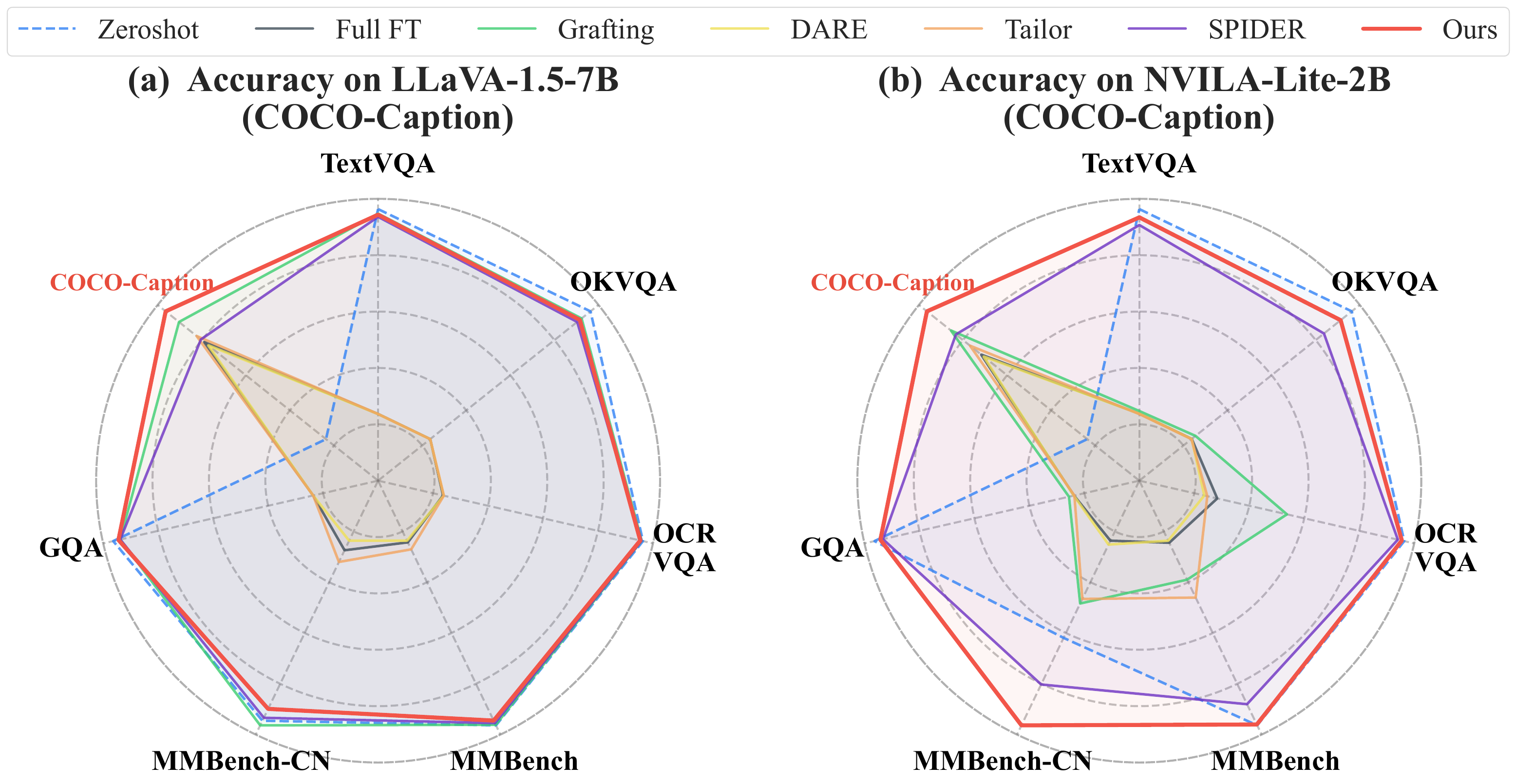}\\
    \vspace{0.5mm}
    \includegraphics[width=0.6\linewidth]{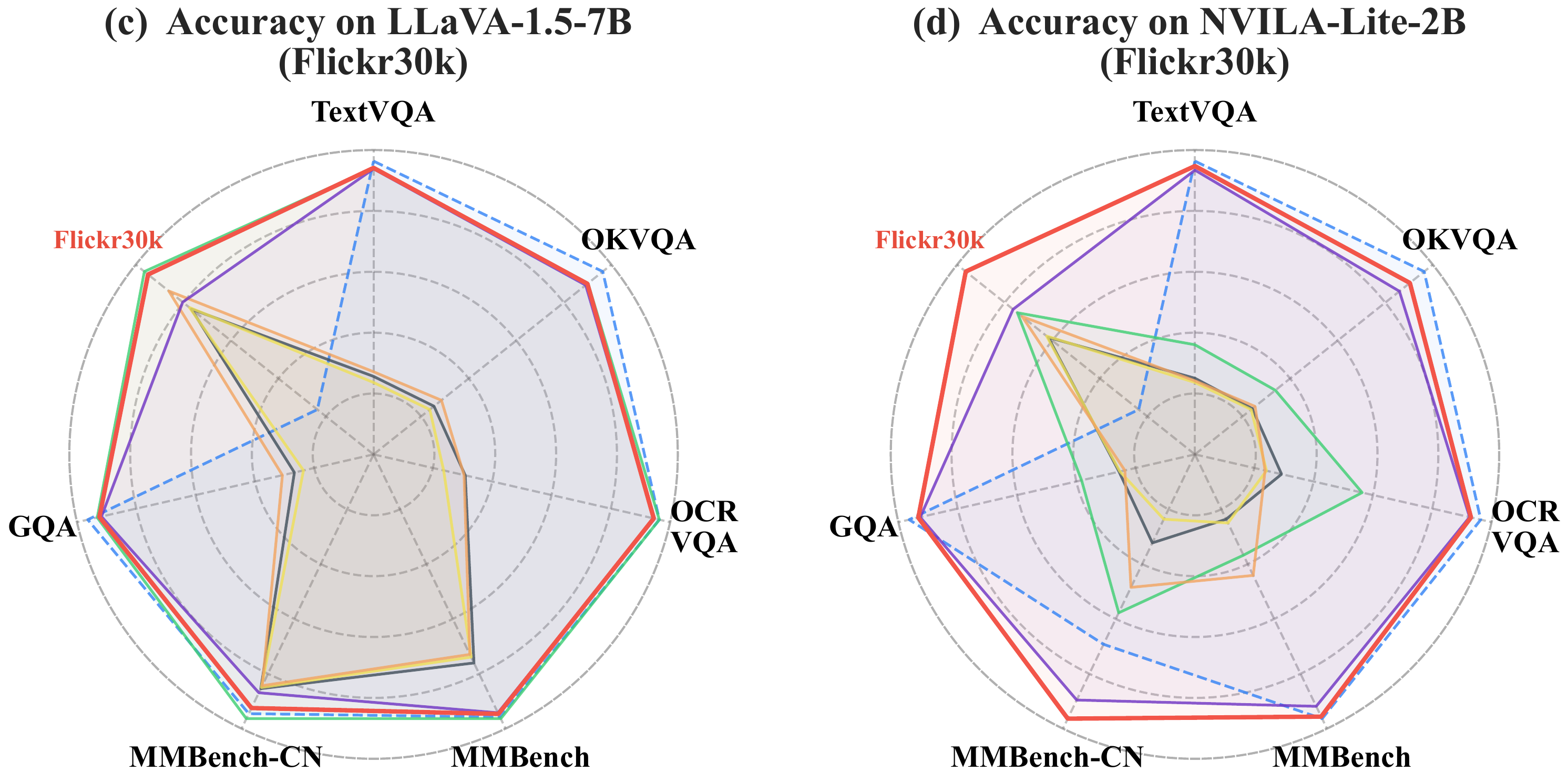}\\
    \vspace{0.5mm}
    \includegraphics[width=0.6\linewidth]{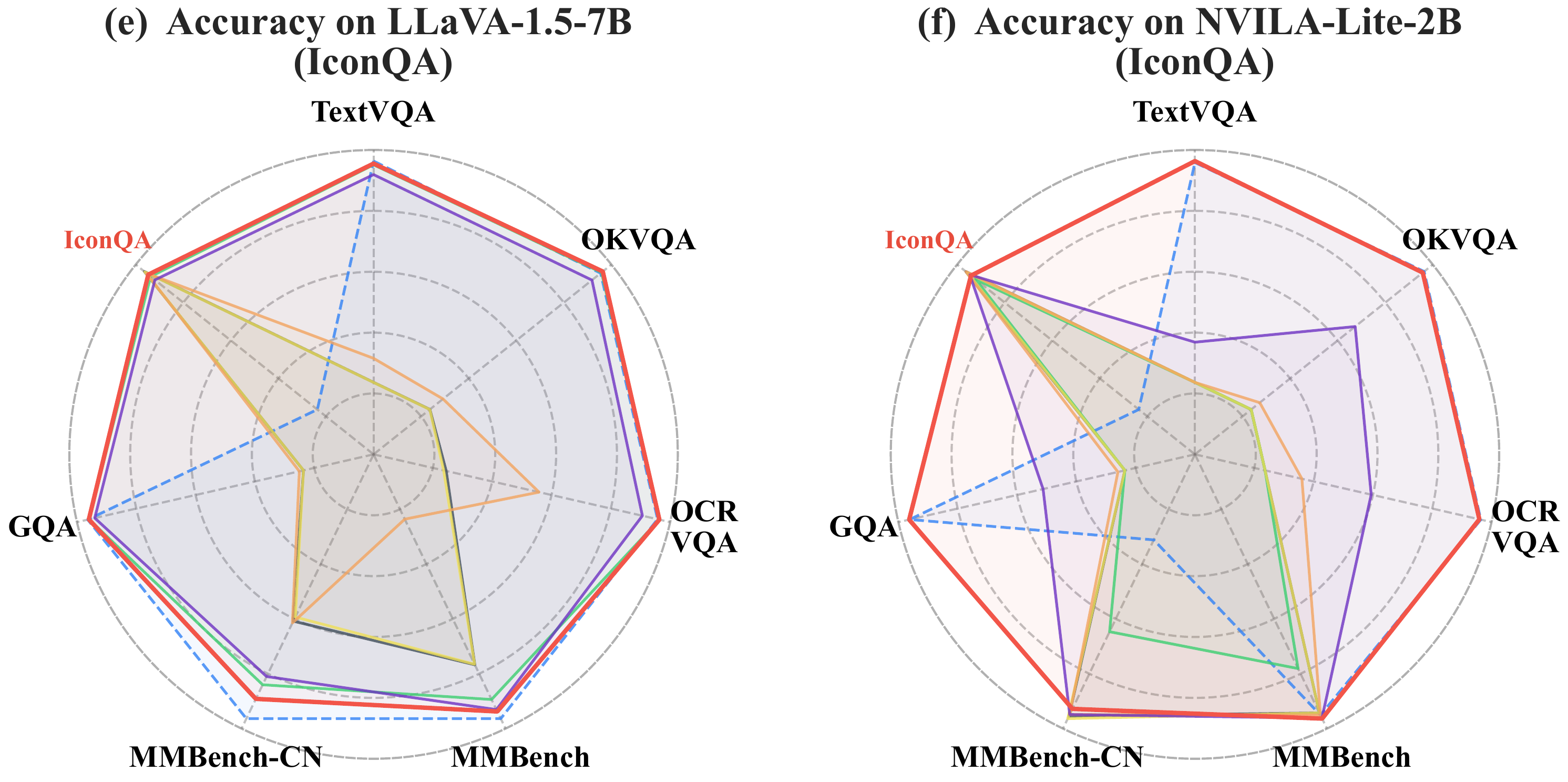}\\
    \caption{Radar chart on diverse benchmarks on LLaVA-1.5-7B and NVILA-Lite when finetuning all layers.}
    \label{fig:app_radar}
\end{figure}

\clearpage

\rev{
\subsection{Applicability on Vision Encoder}
We extend Model-Dowser to jointly fine-tune the vision encoder alongside the language decoder in LLaVA-1.5-7B. Importance scores for the vision encoder follow the same Monte Carlo sampling procedure with the Hutchinson estimator as described in the main paper, except that inputs are sampled from a Gaussian distribution parameterized by ImageNet statistics. For the language decoder, importance scores are computed according to the procedure described in \cref{subsec:estimation}. All parameters are fine-tuned for 5 epochs with a learning rate of $2{\times}10^{-5}$ and $\rho=0.1$.

As shown in Table~\ref{tab:app_vision_encoder}, Dowser with vision encoder fine-tuning substantially outperforms Full-FT on both tasks, achieving H-scores of 79.9 on COCO and 68.9 on ImageNet-R, compared to 13.3 and 18.1 for Full-FT. This confirms that the importance-guided sparse update strategy generalizes effectively beyond the language decoder.
}
\begin{table*}[htb]
    \setlength{\tabcolsep}{0.75mm}
    \caption{Performance comparison when extending Model-Dowser to also fine-tune the \textbf{vision encoder} in \textbf{\textit{LLaVA-1.5-7B}}, evaluated on \textbf{COCO} and \textbf{ImageNet-R}. The last 32 layers of the language decoder and the vision encoder are fine-tuned for 5 epochs (lr=$2{\times}10^{-5}$, $\rho{=}0.1$). Importance scores for the vision encoder are computed via Gaussian-sampled inputs; the language decoder follows the main paper's procedure. \textbf{Bold} and \underline{underlined} entries represent the best and second-best results, respectively.}
    \label{tab:app_vision_encoder}
    \centering
    \small
    \resizebox{\linewidth}{!}{%
        \begin{tabular}{l|ccccccc|c|cc||ccccccc|c|cc}
        \toprule
        \multirow{3}{*}{{\textbf{Method}}} & \multicolumn{10}{c||}{\cellcolor{lightgrey}\textbf{COCO}} & \multicolumn{10}{c}{\cellcolor{lightgrey}\textbf{ImageNet-R}} \\
        \cmidrule{2-21}
         & {TextVQA} & {OKVQA} & {OCRVQA} & {MMB} & {MMB\scriptsize{(CN)}} & {GQA} & {$A_{up}$} & {$A_{down}$} & {Avg \scriptsize{$\uparrow$}} & {H-Score \scriptsize{$\uparrow$}} & {TextVQA} & {OKVQA} & {OCRVQA} & {MMB} & {MMB\scriptsize{(CN)}} & {GQA} & {$A_{up}$} & {$A_{down}$} & {Avg \scriptsize{$\uparrow$}} & {H-Score \scriptsize{$\uparrow$}} \\ \midrule
        Zeroshot                  & 58.3 & 58.0 & 65.1 & 64.6 & 58.2 & 62.0 & 61.0 &  40.3 &   50.6  &  \underline{48.5} & 58.3 & 58.0 & 65.1 & 64.6 & 58.2 & 62.0 & 61.0 & 16.3 &  38.6  &  \underline{25.7}  \\ 
        Full-FT                    &  0.0 &  0.5 &  2.9 & 11.2 & 28.2 &  0.0 &  7.1 & \underline{101.8} & \underline{54.5} & 13.3 & 11.1 &  4.6 &  6.5 & 17.2 & 18.5 &  2.5 & 10.1 & \textbf{91.2} & \underline{50.6} & 18.1 \\ 
        Dowser (+ Vision Enc.)     & 55.8 & 54.2 & 66.3 & 64.6 & 55.8 & 59.3 & 59.3 & \textbf{122.2} & \textbf{90.8} & \textbf{79.9} & 55.1 & 46.5 & 57.8 & 64.5 & 55.3 & 53.9 & 55.5 & \underline{90.9} & \textbf{73.2} & \textbf{68.9} \\ \bottomrule
        \end{tabular}%
    }
\end{table*}

\clearpage

\rev{
\section{Effect of Learning Rate on Downstream and Upstream Performance}
\label[appendix]{app:lr_sweep}
In our main experiments, we adopt the learning rates from the official fine-tuning configurations of LLaVA-1.5~\cite{liu2023visual} and NVILA~\cite{liu2025nvila}, which are also widely used in prior works, to ensure fair and reproducible comparisons without method-specific hyperparameter tuning.

To further investigate whether the performance gap across methods is sensitive to this choice, we conduct additional experiments across three learning rates ($1\times10^{-4}$, $2\times10^{-5}$, and $5\times10^{-6}$) on two downstream tasks, COCO-Caption and ImageNet-R, using LLaVA-1.5-7B with the last 20 layers fine-tuned and an update ratio of $\rho=0.1$.

As shown in Table~\ref{tab:app_res_lr}, Dowser consistently achieves the highest H-scores on both tasks, while Full-FT, DARE, and Tailor exhibit substantial upstream forgetting, particularly on ImageNet-R. At high learning rates, all baselines suffer severe degradation, whereas Dowser maintains stable performance across all learning rates, consistently retaining its advantage over competing methods.

Notably, post-merging methods such as Tailor and DARE show inconsistent upstream performance as the learning rate increases, reflecting the fundamental instability of post-hoc weight fusion under large parameter shifts. In contrast, sparse fine-tuning methods demonstrate more stable behavior, with Dowser consistently outperforming SPIDER thanks to its principled, data-free importance scoring.

These results reinforce our central claim: the performance advantage of Dowser is not an artifact of any particular learning rate choice, but stems from the selective preservation of functionally important parameters, a property that post-merging methods and unconstrained fine-tuning fundamentally lack.

\begin{table*}[htb]
    \setlength{\tabcolsep}{0.75mm} 
    \caption{\rev{Performance comparison on COCO-Caption and ImageNet-R downstream tasks using \textbf{\textit{LLaVA-1.5-7B}}, where only the last 20 layers of the language decoder are fine-tuned. The left block shows results for COCO-Caption, and the right block shows results for ImageNet-R. Results show the downstream performance (\textbf{Avg}, \textbf{H-score}) with an update ratio of $\rho=0.1$. \textbf{Bold} and \underline{underlined} entries represent the best and second-best results in each category.}}
    \label{tab:app_res_lr}
    \centering
    \small
    \resizebox{\linewidth}{!}{%
        \begin{tabular}{l|cccccc|c|cc||cccccc|c|cc}
        \toprule
        \multirow{3}{*}{{\textbf{Method}}} & \multicolumn{9}{c||}{\cellcolor{lightgrey}\textbf{COCO-Caption}} & \multicolumn{9}{c}{\cellcolor{lightgrey}\textbf{ImageNet-R}} \\
        \cmidrule{2-19}
         & {TextVQA} & {OKVQA} & {OCRVQA} & {MMB} & {MMB\scriptsize{(CN)}} & {GQA} & {$A_{down}$} & {Avg \scriptsize{$\uparrow$}} & {H-Score \scriptsize{$\uparrow$}} & {TextVQA} & {OKVQA} & {OCRVQA} & {MMB} & {MMB\scriptsize{(CN)}} & {GQA} & {$A_{down}$} & {Avg  \scriptsize{$\uparrow$}} & {H-Score \scriptsize{$\uparrow$}} \\ \midrule
        Zeroshot & 58.3 & 58.0 & 65.1 & 64.6 & 58.2 & 62.0 & 40.3 & 50.7 & 48.5 & 58.3 & 58.0 & 65.1 & 64.6 & 58.2 & 62.0 & 16.3 & 38.7 & 25.8 \\ \midrule
        \multicolumn{19}{l}{\cellcolor{lightgrey}\small\textit{Learning rate: $\lambda = 1\times10^{-4}$}} \\ \midrule
        Full-FT
          & 39.7 & 30.4 & 50.4 & 60.4 & 52.5 & 48.0
          & 97.7 & 72.3 & 63.4
          & 5.3  & 3.9  & 0.7  & 10.5 & 9.4  & 1.0
          & \textbf{90.1} & 47.6 & 9.7 \\
        DARE
          & 37.9 & 29.1 & 47.2 & 57.6 & 49.6 & 47.6
          & 96.9 & 70.9 & 61.3
          & 5.2  & 3.8  & 0.7  & 9.6  & 10.2 & 1.1
          & \underline{90.0} & 47.6 & 9.6 \\
        Tailor
          & 42.9 & 37.1 & 54.4 & 61.2 & 52.8 & 52.2
          & 99.5 & 74.8 & 66.6
          & 12.8 & 6.4  & 4.0  & 21.8 & 23.1 & 2.7
          & 88.7 & 50.3 & \underline{20.8} \\
        SPIDER
          & 50.0 & 46.4 & 60.8 & 62.2 & 56.2 & 56.3
          & \textbf{102.5} & \underline{78.9} & \underline{71.8}
          & 12.2 & 6.0  & 5.5  & 18.2 & 21.6 & 2.3
          & \textbf{90.1} & \underline{50.5} & 19.5 \\
        Dowser
          & 53.7 & 50.2 & 62.1 & 62.7 & 56.5 & 56.4
          & \underline{102.4} & \textbf{79.7} & \textbf{73.2}
          & 45.5 & 26.2 & 57.9 & 58.4 & 50.0 & 42.6
          & 89.4 & \textbf{68.1} & \textbf{61.4} \\
        \midrule
        \multicolumn{19}{l}{\cellcolor{lightgrey}\small\textit{Learning rate: $\lambda = 2\times10^{-5}$}} \\ \midrule
        Full-FT
          & 52.7 & 48.8 & 63.0 & 63.9 & 58.0 & 57.5
          & 100.0 & 78.7 & 72.9
          & 24.3 & 10.2 & 26.2 & 48.6 & 45.0 & 20.6
          & \underline{89.7} & 59.4 & 44.0 \\
        Grafting
          & 57.2 & 55.9 & 64.1 & 64.8 & 58.5 & 59.8
          & 81.6  & 70.8 & 69.2
          & 57.7 & 55.8 & 65.7 & 64.6 & 58.8 & 61.4
          & 67.3 & 64.0 & \underline{63.8} \\
        DARE
          & 52.2 & 48.1 & 62.4 & 63.7 & 58.1 & 57.1
          & 99.1  & 78.0 & 72.3
          & 23.8 & 9.9  & 25.6 & 47.3 & 44.8 & 16.7
          & \textbf{89.8} & 58.9 & 42.7 \\
        Tailor
          & 55.0 & 50.9 & 64.5 & 64.8 & 58.4 & 58.8
          & \underline{106.7} & \underline{82.7} & \underline{75.8}
          & 38.1 & 18.2 & 40.1 & 61.4 & 54.8 & 36.8
          & 84.3 & 62.9 & 55.7 \\
        SPIDER
          & 56.1 & 53.2 & 64.7 & 63.9 & 57.5 & 59.8
          & 103.1 & 81.1 & 75.2
          & 46.8 & 27.4 & 55.6 & 63.0 & 54.8 & 43.0
          & 89.3 & \underline{68.9} & 62.8 \\
        Dowser
          & 57.0 & 54.3 & 65.0 & 62.9 & 55.0 & 60.3
          & \textbf{123.6} & \textbf{91.3} & \textbf{79.9}
          & 54.6 & 48.4 & 64.8 & 64.4 & 55.8 & 56.5
          & 88.8 & \textbf{73.1} & \textbf{69.7} \\
        \midrule
        \multicolumn{19}{l}{\cellcolor{lightgrey}\small\textit{Learning rate: $\lambda = 5\times10^{-6}$}} \\ \midrule
        Full-FT
          & 56.8 & 54.3 & 64.4 & 64.0 & 57.2 & 59.9
          & 109.9 & 84.7 & 77.1
          & 50.0 & 34.0 & 61.6 & 64.4 & 57.1 & 47.7
          & \textbf{89.5} & 71.0 & 66.2 \\
        DARE
          & 56.5 & 54.2 & 64.4 & 64.2 & 57.0 & 59.8
          & 108.2 & 83.8 & 76.7
          & 49.5 & 33.3 & 61.3 & 64.2 & 57.6 & 47.3
          & \textbf{89.5} & 70.8 & 65.9 \\
        Tailor
          & 57.1 & 54.5 & 64.2 & 64.6 & 58.3 & 60.3
          & 115.4 & 87.6 & 78.8
          & 53.6 & 43.1 & 63.2 & 64.6 & 57.6 & 53.1
          & 79.4 & 67.6 & 65.6 \\
        SPIDER
          & 57.5 & 56.4 & 64.9 & 64.1 & 57.0 & 61.5
          & \underline{125.8} & \underline{93.0} & \underline{81.5}
          & 55.0 & 47.4 & 65.0 & 64.3 & 57.2 & 55.0
          & \underline{88.5} & \underline{72.9} & \underline{69.6} \\
        Dowser
          & 58.0 & 57.0 & 64.7 & 64.7 & 57.7 & 61.4
          & \textbf{128.5} & \textbf{94.5} & \textbf{82.4}
          & 57.0 & 54.2 & 66.0 & 64.5 & 57.5 & 60.0
          & 87.5 & \textbf{73.7} & \textbf{71.1} \\
        \bottomrule
        \end{tabular}%
    }
\end{table*}
}

\newpage

\section{Comparison with Random Selection}
\label[appendix]{app:comparison_random_select}
To further validate the effectiveness of our sensitivity-based importance scoring, we compare \ours with a random-selection baseline across varying update ratios ($\rho$).  \Cref{tab:app_mask_coco,tab:app_mask_imgnet} summarize the results for ImageNet-R and COCO-Caption respectively.

\ours consistently outperforms the random selection baseline across datasets and ratios. This performance gap becomes particularly evident as the update budget increases. While the difference is smaller at low sparsity ($\rho=0.1$), the random baseline fails to effectively preserve upstream knowledge as the update ratio increases to $\rho=0.5$. For instance, on ImageNet-R with $\rho=0.5$, \ours achieves an H-score of 42.6 with LLaVA, significantly higher than the 33.2 obtained by random selection.

Importantly, these performance improvements are statistically significant. We conducted two-sided t-tests under the $\rho=0.5$ condition. For NVILA-Lite-2B, the analysis produced $p$-values of 0.003 for the H-score on COCO-Caption and 0.004 for the H-score on ImageNet-R. Likewise, LLaVA-1.5-7B also shows statistically significant gains, with $p$-values of 0.08 and 0.007 on COCO-Caption and ImageNet-R, respectively. Together, these findings validate the effectiveness of the proposed importance score.

\begin{table}[h]
    \setlength{\tabcolsep}{1.0mm}
    \caption{Experimental result on ImageNet-R with different update ratios ($\rho$) across 28 (NVILA) and 32 (LLaVA) layers. Random selection vs ours. Numbers after $\pm$ indicate standard deviation across three random seeds.}
    \label{tab:app_mask_imgnet}
    \begin{center}
    \begin{small}
        \begin{tabular}{ l|c|ccccccc|c|cc}
            \toprule
             \textbf{} &Ratio ($\rho$)& TextVQA & OKVQA & OCR & MMB& MMB \scriptsize(CN) & GQA & \textbf{${A_{up}}$} & ImageNet-R& Avg $\uparrow$ & H-score $\uparrow$ \\ \midrule
            
            \rowcolor{lightgrey} \multicolumn{12}{l}{\textbf{NVILA-Lite-2B}} \\\midrule
            Full FT &1.0&12.2 & 9.3  & 2.3 & 36.5 & 17.8 & 4.9 & 13.8 & 92.3 & 53.1 & 24.0 \\ \midrule
            
            Random &\multirow{2}{*}{0.1}& 53.7 \scriptsize{$\pm$2.8} & 41.0 \scriptsize{$\pm$0.8} & 66.0 \scriptsize{$\pm$0.4} & 72.8 \scriptsize{$\pm$1.2} & 39.2 \scriptsize{$\pm$2.0} & 54.1 \scriptsize{$\pm$0.8} & 54.5 \scriptsize{$\pm$1.0} & \textbf{90.7} \scriptsize{$\pm$0.2} & 72.6 \scriptsize{$\pm$0.6} & 68.0 \scriptsize{$\pm$0.8} \\
            Ours && 57.9 & 43.4 & 67.7 & 75.0 & 39.4 & 56.5 & \textbf{56.6} & 90.5 & \textbf{73.6} & \textbf{69.7} \\ \midrule
            
            Random&\multirow{2}{*}{0.25} & 43.2 \scriptsize{$\pm$1.7} & 30.2 \scriptsize{$\pm$1.2} & 57.6 \scriptsize{$\pm$0.7} & 71.0 \scriptsize{$\pm$0.8} & 46.4 \scriptsize{$\pm$0.8} & 45.4 \scriptsize{$\pm$0.9} & 49.0 \scriptsize{$\pm$0.7} & 91.5 \scriptsize{$\pm$0.2} & 70.2 \scriptsize{$\pm$0.3} & 63.8 \scriptsize{$\pm$0.5} \\
            Ours & & 50.1 & 33.1 & 63.5 & 75.0 & 47.3 & 49.3 & \textbf{53.0} & \textbf{91.7} & \textbf{72.4} & \textbf{67.2} \\ \midrule
            
            Random &\multirow{2}{*}{0.5} & 30.6 \scriptsize{$\pm$2.5} & 19.4 \scriptsize{$\pm$0.6} & 41.1 \scriptsize{$\pm$1.5} & 64.3 \scriptsize{$\pm$2.7} & 47.6 \scriptsize{$\pm$1.2} & 32.3 \scriptsize{$\pm$0.7} & 39.2 \scriptsize{$\pm$0.9} & \textbf{92.2} \scriptsize{$\pm$0.2} & 65.7 \scriptsize{$\pm$0.4} & 55.0 \scriptsize{$\pm$0.8} \\
            Ours & & 42.3 & 25.2 & 52.6 & 71.7 & 50.7 & 42.4 & \textbf{47.5} & \textbf{92.2} & \textbf{69.8} & \textbf{62.7} \\ \midrule
            
            Random&\multirow{2}{*}{0.75}  & 22.8 \scriptsize{$\pm$3.9} & 14.9 \scriptsize{$\pm$0.7} & 16.0 \scriptsize{$\pm$4.6} & 46.1 \scriptsize{$\pm$11.3} & 35.7 \scriptsize{$\pm$6.0} & 18.0 \scriptsize{$\pm$3.5} & 25.6 \scriptsize{$\pm$4.4} & \textbf{92.6} \scriptsize{$\pm$0.2} & 59.1 \scriptsize{$\pm$2.3} & 40.1 \scriptsize{$\pm$5.5} \\
            Ours & & 30.9 & 17.1 & 41.1 & 53.1 & 38.9 & 32.0 & \textbf{35.5} & 92.5 & \textbf{64.0} & \textbf{51.3} \\ \midrule
            
            Random &\multirow{2}{*}{0.9} & 19.9 \scriptsize{$\pm$1.7} & 12.5 \scriptsize{$\pm$0.2} & 9.1 \scriptsize{$\pm$3.5} & 40.4 \scriptsize{$\pm$2.6} & 27.8 \scriptsize{$\pm$1.8} & 12.6 \scriptsize{$\pm$1.3} & 20.4 \scriptsize{$\pm$1.4} & \textbf{92.6} \scriptsize{$\pm$0.2} & 56.5 \scriptsize{$\pm$0.8} & 33.4 \scriptsize{$\pm$1.9} \\
            Ours & & 22.0 & 13.2 & 24.9 & 41.6 & 27.5 & 20.0 & \textbf{24.9} & \textbf{92.6} & \textbf{58.7} & \textbf{39.2} \\ \midrule
            
            \rowcolor{lightgrey} \multicolumn{12}{l}{\textbf{LLaVA-1.5-7B}} \\\midrule
            Full FT &1.0&10.5 & 6.3  & 2.3 & 17.8 & 17.4 & 2.3 & 9.4 & 89.8 & 49.6 & 17.1 \\ \midrule

            Random &\multirow{2}{*}{0.1}& 54.2 \scriptsize{$\pm$0.1} & 44.3 \scriptsize{$\pm$0.1} & 59.3 \scriptsize{$\pm$0.2} & 63.0 \scriptsize{$\pm$0.1} & 54.9 \scriptsize{$\pm$0.2} & 52.3 \scriptsize{$\pm$0.1} & 54.6 \scriptsize{$\pm$0.1} & 88.5 \scriptsize{$\pm$0.1} & 71.6 \scriptsize{$\pm$0.0} & 67.6 \scriptsize{$\pm$0.1} \\
            Ours && 54.9 & 46.1 & 62.0 & 63.4 & 55.1 & 53.6 & \textbf{55.8} & \textbf{88.6} & \textbf{72.2} & \textbf{68.5} \\ \midrule
            
            Random&\multirow{2}{*}{0.25} & 45.2 \scriptsize{$\pm$0.1} & 28.5 \scriptsize{$\pm$0.2} & 41.5 \scriptsize{$\pm$0.7} & 53.1 \scriptsize{$\pm$0.3} & 49.8 \scriptsize{$\pm$0.1} & 41.7 \scriptsize{$\pm$0.1} & 43.3 \scriptsize{$\pm$0.2} & 89.3 \scriptsize{$\pm$0.1} & 66.3 \scriptsize{$\pm$0.1} & 58.3 \scriptsize{$\pm$0.2} \\
            Ours & & 49.2 & 33.9 & 47.6 & 59.5 & 53.8 & 45.2 & \textbf{48.2} & \textbf{89.5} & \textbf{68.9} & \textbf{62.7} \\ \midrule
            
            Random &\multirow{2}{*}{0.5} & 21.4 \scriptsize{$\pm$0.5} & 12.3 \scriptsize{$\pm$0.0} & 14.7 \scriptsize{$\pm$0.3} & 30.3 \scriptsize{$\pm$0.4} & 26.7 \scriptsize{$\pm$0.2} & 16.8 \scriptsize{$\pm$0.8} & 20.4 \scriptsize{$\pm$0.3} & \textbf{90.0} \scriptsize{$\pm$0.1} & 55.2 \scriptsize{$\pm$0.1} & 33.2 \scriptsize{$\pm$0.5} \\
            Ours & & 29.5 & 15.3 & 20.3 & 38.5 & 36.6 & 27.3 & \textbf{27.9} & 89.7 & \textbf{58.8} & \textbf{42.6} \\ \midrule
            
            Random&\multirow{2}{*}{0.75}  & 13.3 \scriptsize{$\pm$0.1} & 7.2 \scriptsize{$\pm$0.1} & 7.6 \scriptsize{$\pm$0.2} & 21.8 \scriptsize{$\pm$0.1} & 18.8 \scriptsize{$\pm$0.7} & 3.3 \scriptsize{$\pm$0.3} & 12.0 \scriptsize{$\pm$0.2} & \textbf{89.8} \scriptsize{$\pm$0.2} & 50.9 \scriptsize{$\pm$0.1} & 21.2 \scriptsize{$\pm$0.3} \\
            Ours & & 14.6 & 7.8 & 10.1 & 24.3 & 20.4 & 4.3 & \textbf{13.6} & 89.7 & \textbf{51.6} & \textbf{23.6} \\ \midrule
            
            Random &\multirow{2}{*}{0.9} & 11.0 \scriptsize{$\pm$0.3} & 6.5 \scriptsize{$\pm$0.1} & 3.1 \scriptsize{$\pm$0.3} & 17.9 \scriptsize{$\pm$0.8} & 17.9 \scriptsize{$\pm$0.2} & 2.4 \scriptsize{$\pm$0.1} & 9.8 \scriptsize{$\pm$0.2} & 89.7 \scriptsize{$\pm$0.1} & 49.8 \scriptsize{$\pm$0.1} & 17.6 \scriptsize{$\pm$0.2} \\
            Ours & & 11.5 & 6.5 & 4.3 & 18.4 & 18.0 & 2.4 & \textbf{10.2} & \textbf{89.9} & \textbf{50.1} & \textbf{18.3} \\ 
            \bottomrule
        \end{tabular}
    \end{small}
    \end{center}
\end{table}
\begin{table}[t]
    \setlength{\tabcolsep}{1.0mm}
    \caption{Experimental result on COCO-Caption with different update ratios ($\rho$) across 28 (NVILA) and 32 (LLaVA) layers. Random selection vs ours. Numbers after $\pm$ indicate standard deviation across three random seeds.}
    \label{tab:app_mask_coco}
    \begin{center}
    \begin{small}
        \begin{tabular}{ l|c|ccccccc|c|cc}
            \toprule
             \textbf{} &Ratio ($\rho$)& TextVQA & OKVQA & OCR & MMB& MMB \scriptsize(CN) & GQA & \textbf{${A_{up}}$} & COCO-C & Avg $\uparrow$ & H-score $\uparrow$ \\ \midrule
            \rowcolor{lightgrey} \multicolumn{12}{l}{\textbf{NVILA-Lite-2B}} \\\midrule
            Full FT & 1.0 & 0.2 & 0.1  & 22.5 & 25.4 & 29.3 & 0.2 & 12.9 & 98.2 & 55.6 & 22.9 \\ \midrule
            
            Random &\multirow{2}{*}{0.1} & 67.5 \scriptsize{$\pm$0.2} & 47.1 \scriptsize{$\pm$0.2} & 68.7 \scriptsize{$\pm$0.1} & 76.9 \scriptsize{$\pm$0.1} & 52.9 \scriptsize{$\pm$0.4} & 60.5 \scriptsize{$\pm$0.1} & 62.3 \scriptsize{$\pm$0.1} & 134.8 \scriptsize{$\pm$0.2} & 98.6 \scriptsize{$\pm$0.1} & 85.2 \scriptsize{$\pm$0.1} \\
            Ours & & 68.0 & 47.6 & 68.8 & 77.8 & 53.8 & 60.4 & \textbf{62.7} & \textbf{135.1} & \textbf{98.9} & \textbf{85.7} \\ \midrule
            
            Random  &\multirow{2}{*}{0.25} & 64.4 \scriptsize{$\pm$1.0} & 43.3 \scriptsize{$\pm$0.7} & 68.1 \scriptsize{$\pm$0.2} & 73.1 \scriptsize{$\pm$0.5} & 52.1 \scriptsize{$\pm$0.2} & 58.3 \scriptsize{$\pm$0.5} & 59.9 \scriptsize{$\pm$0.4} & \textbf{124.3} \scriptsize{$\pm$0.5} & 92.1 \scriptsize{$\pm$0.2} & 80.8 \scriptsize{$\pm$0.3} \\
            Ours & & 66.6 & 45.0 & 68.7 & 75.5 & 53.2 & 59.3 & \textbf{61.4} & 123.8 & \textbf{92.6} & \textbf{82.1} \\ \midrule
            
            Random   &\multirow{2}{*}{0.5} & 30.8 \scriptsize{$\pm$0.6} & 26.1 \scriptsize{$\pm$1.4} & 65.7 \scriptsize{$\pm$0.2} & 63.9 \scriptsize{$\pm$2.7} & 47.4 \scriptsize{$\pm$0.8} & 35.0 \scriptsize{$\pm$3.4} & 44.8 \scriptsize{$\pm$1.4} & 105.2 \scriptsize{$\pm$0.4} & 75.0 \scriptsize{$\pm$0.7} & 62.9 \scriptsize{$\pm$1.4} \\
            Ours & & 55.0 & 35.0 & 67.2 & 71.7 & 51.5 & 51.0 & \textbf{55.2} & \textbf{106.4} & \textbf{80.8} & \textbf{72.7} \\ \midrule
            
            Random  &\multirow{2}{*}{0.75}  & 2.3 \scriptsize{$\pm$0.4} & 4.7 \scriptsize{$\pm$3.1} & 53.7 \scriptsize{$\pm$5.5} & 52.1 \scriptsize{$\pm$3.5} & 41.6 \scriptsize{$\pm$0.2} & 8.1 \scriptsize{$\pm$4.4} & 27.1 \scriptsize{$\pm$2.7} & \textbf{98.6} \scriptsize{$\pm$0.2} & 62.8 \scriptsize{$\pm$1.3} & 42.5 \scriptsize{$\pm$3.4} \\
            Ours  & & 10.4 & 14.2 & 63.9 & 62.4 & 49.1 & 20.3 & \textbf{36.7} & 98.5 & \textbf{67.6} & \textbf{53.5} \\ \midrule
            
            Random &\multirow{2}{*}{0.9}   & 0.4 \scriptsize{$\pm$0.2} & 1.3 \scriptsize{$\pm$0.5} & 32.5 \scriptsize{$\pm$1.4} & 42.8 \scriptsize{$\pm$3.6} & 38.7 \scriptsize{$\pm$3.2} & 2.2 \scriptsize{$\pm$0.9} & 19.7 \scriptsize{$\pm$0.7} & \textbf{97.8} \scriptsize{$\pm$0.1} & 58.7 \scriptsize{$\pm$0.3} & 32.7 \scriptsize{$\pm$0.9} \\
            Ours & & 0.9 & 2.0 & 38.9 & 49.1 & 43.6 & 3.2 & \textbf{22.9} & 97.7 & \textbf{60.3} & \textbf{37.1} \\ \midrule

            \rowcolor{lightgrey} \multicolumn{12}{l}{\textbf{LLaVA-1.5-7B}} \\\midrule
            Full FT & 1.0 & 0.1 & 0.0  & 2.5 & 8.6 & 15.5 & 0.0 & 4.5 & 101.5 & 53.0 & 8.5 \\ \midrule
            
            Random &\multirow{2}{*}{0.1} & 56.5 \scriptsize{$\pm$0.1} & 53.5 \scriptsize{$\pm$0.2} & 64.7 \scriptsize{$\pm$0.7} & 62.7 \scriptsize{$\pm$0.2} & 55.5 \scriptsize{$\pm$0.2} & 60.1 \scriptsize{$\pm$0.4} & 58.8 \scriptsize{$\pm$0.3} & \textbf{120.8} \scriptsize{$\pm$0.3} & 89.8 \scriptsize{$\pm$0.0} & 79.1 \scriptsize{$\pm$0.2} \\
            Ours & &56.7 & 54.1 & 64.3 & 63.1 & 55.2 & 60.4 & \textbf{59.0} & \textbf{120.8} & \textbf{89.9} & \textbf{79.3} \\ \midrule
            
            Random  &\multirow{2}{*}{0.25} & 49.0 \scriptsize{$\pm$0.2} & 47.2 \scriptsize{$\pm$0.0} & 63.6 \scriptsize{$\pm$0.5} & 61.5 \scriptsize{$\pm$0.1} & 56.5 \scriptsize{$\pm$0.0} & 57.8 \scriptsize{$\pm$0.5} & 55.9 \scriptsize{$\pm$0.2} & \textbf{105.2} \scriptsize{$\pm$0.5} & 80.6\scriptsize{$\pm$0.3} & 73.0 \scriptsize{$\pm$0.3}\\
            Ours & & 52.3 & 49.7 & 63.7 & 62.8 & 56.4 & 58.7 & \textbf{57.3} & 104.5 & \textbf{80.9} & \textbf{74.0} \\ \midrule
            
            Random   &\multirow{2}{*}{0.5} & 11.9 \scriptsize{$\pm$2.8} & 18.8 \scriptsize{$\pm$2.1} & 32.9 \scriptsize{$\pm$9.5} & 52.1 \scriptsize{$\pm$1.4} & 53.5 \scriptsize{$\pm$0.3} & 29.4 \scriptsize{$\pm$3.4} & 33.1 \scriptsize{$\pm$3.0} & 100.5 \scriptsize{$\pm$0.1} & 66.8 \scriptsize{$\pm$1.5} & 49.8\scriptsize{$\pm$3.5} \\
            Ours & & 20.0 & 26.2 & 36.6 & 54.4 & 56.4 & 41.9 & \textbf{39.2} & \textbf{100.8} & \textbf{70.0} & \textbf{56.5} \\ \midrule
            
            Random  &\multirow{2}{*}{0.75}  & 0.1 \scriptsize{$\pm$0.0} & 1.2 \scriptsize{$\pm$0.4} & 4.2 \scriptsize{$\pm$0.6} & 22.4 \scriptsize{$\pm$6.9} & 37.8 \scriptsize{$\pm$7.9} & 0.5 \scriptsize{$\pm$0.3} & 11.0 \scriptsize{$\pm$2.5} & 100.5 \scriptsize{$\pm$0.3} & 55.8\scriptsize{$\pm$1.3} & 19.9\scriptsize{$\pm$4.2} \\
            Ours  & & 0.2 & 2.2 & 6.4 & 27.2 & 43.8 & 0.9 & \textbf{13.4} & \textbf{100.9} & \textbf{57.2} & \textbf{23.7} \\ \midrule
            
            Random &\multirow{2}{*}{0.9}   & 0.1 \scriptsize{$\pm$0.0} & 0.2 \scriptsize{$\pm$0.1} & 3.0 \scriptsize{$\pm$0.2} & 11.4 \scriptsize{$\pm$3.1} & 23.3 \scriptsize{$\pm$6.0} & 0.0 \scriptsize{$\pm$0.0} & 6.3 \scriptsize{$\pm$1.5} & 100.5 \scriptsize{$\pm$0.7} & 53.4\scriptsize{$\pm$0.7} & 11.9\scriptsize{$\pm$2.7} \\
            Ours & & 0.1 & 0.4 & 3.2 & 12.5 & 24.0 & 0.0 & \textbf{6.7} & \textbf{101.3} & \textbf{54.0} & \textbf{12.5} \\
            \bottomrule
        \end{tabular}
    \end{small}
    \end{center}
\end{table}

\clearpage

\section{Stability of the Data-Free Estimation on the Importance Score}
\label[appendix]{app:stability_estimation}
We provide a more detailed result that our data-free importance estimation yields a stable, robust ranking of parameter sensitivity.
To evaluate the alignment between our data-free estimation and the score with real samples, we employ two metrics. First, Hamming distance \cite{Hamming1950error} measures the discrepancy between the binary masks of the important parameters. Specifically, consistent with our update ratio, we define the importance mask by selecting the top 10\% of parameters. A lower Hamming distance indicates a higher overlap in the identified subset of critical weights. Second, Spearman's Rank Correlation (Spearman correlation) \cite{spearman1904proof} assesses the monotonic relationship across all parameters by comparing the ranks of their importance scores. A higher coefficient indicates that our method successfully preserves the global sensitivity ranking found in the real-data setting.

\Cref{fig:score_stable} demonstrates the stability and robustness of our approach compared to random selection. There is a significant disparity between the two methods; as the number of samples ($N$) increases, our estimated importance score closely aligns with the score rankings calculated from real data samples. This indicates that our data-free proxy effectively captures the model's true structural sensitivity.

Furthermore, quantitative validation is provided in \Cref{tab:importance_comparison_pval}. The results of a two-tailed paired t-test yield statistically significant $p$-values below $10^{-7}$ for both Hamming distance and Spearman correlation. These results confirm that the alignment achieved by Model-Dowser is robust and statistically distinct from random variations.

The Hamming distance and Spearman correlation both saturate as the number of Rademacher and Monte Carlo samples increases. We choose $R=8$ and $N=64$ for our experiments on the main paper.

\begin{figure}[htb]
    \centering
    \includegraphics[width=0.85\linewidth]{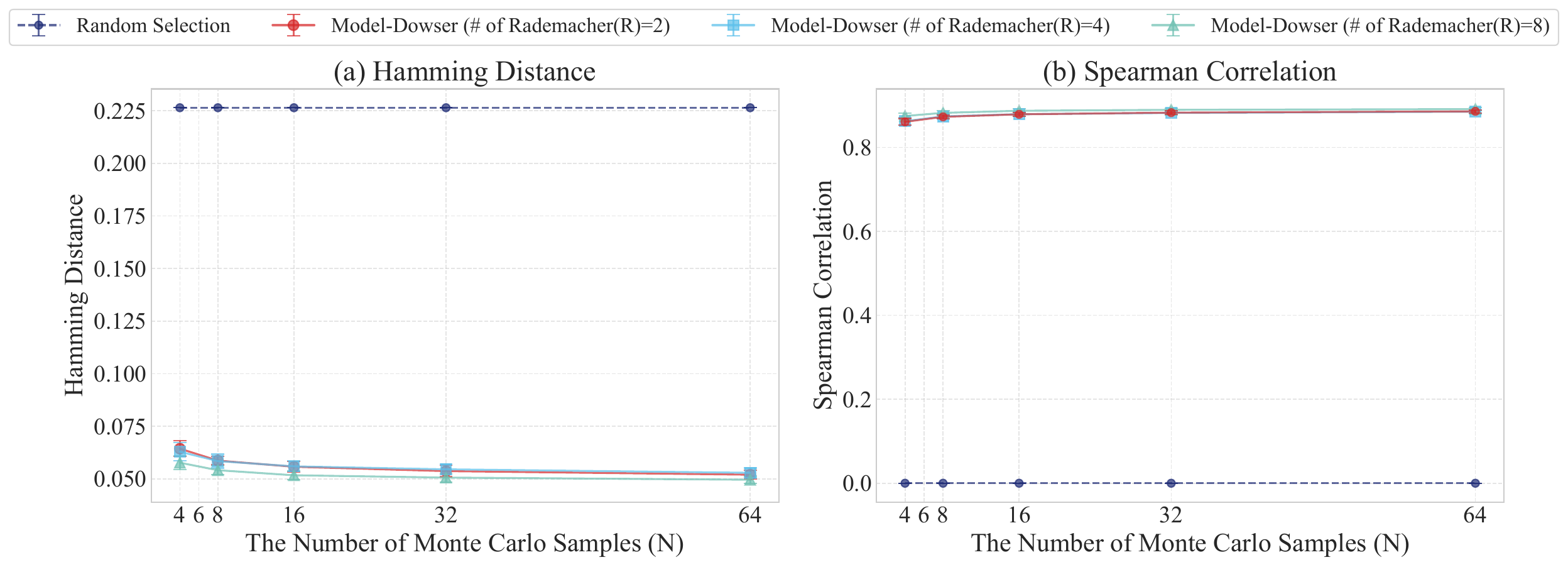}
    \caption{A lineplot shows hamming distance and Spearman correlation with the importance score calculated with real data samples. (a) Hamming distance (lower is better). (b) Spearman correlation (higher is better).}
    \label{fig:score_stable}
\end{figure}

\begin{table}[htb]
    \setlength{\tabcolsep}{0.5mm}

\centering
\caption{\textbf{Statistical comparison of parameter importance} Hamming distance and Spearman rank correlation relative to the score with real-data samples. The Hamming distance is computed using binary masks that select the top 10\% of parameters ($\rho = 0.1$). Results are presented as `mean $\pm$ std' over 5 independent runs. The $p$-values are the result of a two-tailed t-test between the random baseline and ours. $R$ and $N$ denote the number of Rademacher and Monte Carlo samples, respectively.}
\label{tab:importance_comparison_pval}
\small
\begin{tabular*}{\linewidth}{@{\extracolsep{\fill}}ccccccccc}
\toprule
\multirow{2}{*}{\textbf{R}} & \multirow{2}{*}{\textbf{N}} & \multicolumn{3}{c}{\textbf{Hamming Distance} ($\downarrow$)} & & \multicolumn{3}{c}{\textbf{Spearman Correlation} ($\uparrow$)} \\
\cmidrule{3-5} \cmidrule{7-9}
 & & Random Baseline & \textbf{Ours} & $p$-value & & Random Baseline & \textbf{Ours} & $p$-value \\
\midrule
\multirow{3}{*}{2} 
 & 4  & 0.226 $\pm$ 0.000 & \textbf{0.064 $\pm$ 0.004} & $1.1 \times 10^{-7}$ & & 0.000 $\pm$ 0.000 & \textbf{0.861 $\pm$ 0.008} & $2.9 \times 10^{-9}$ \\ 
 & 16 & 0.226 $\pm$ 0.000 & \textbf{0.056 $\pm$ 0.003} & $2.2 \times 10^{-8}$ & & 0.000 $\pm$ 0.000 & \textbf{0.879 $\pm$ 0.004} & $2.6 \times 10^{-10}$ \\ 
 & 64 & 0.226 $\pm$ 0.000 & \textbf{0.052 $\pm$ 0.002} & $8.6 \times 10^{-9}$ & & 0.000 $\pm$ 0.000 & \textbf{0.886 $\pm$ 0.004} & $1.1 \times 10^{-10}$ \\ 
\midrule
\multirow{3}{*}{4} 
 & 4  & 0.226 $\pm$ 0.000 & \textbf{0.063 $\pm$ 0.004} & $1.7 \times 10^{-7}$ & & 0.000 $\pm$ 0.000 & \textbf{0.862 $\pm$ 0.009} & $4.1 \times 10^{-9}$ \\ 
 & 16 & 0.226 $\pm$ 0.000 & \textbf{0.056 $\pm$ 0.003} & $1.8 \times 10^{-8}$ & & 0.000 $\pm$ 0.000 & \textbf{0.879 $\pm$ 0.005} & $3.1 \times 10^{-10}$ \\ 
 & 64 & 0.226 $\pm$ 0.000 & \textbf{0.053 $\pm$ 0.002} & $1.1 \times 10^{-8}$ & & 0.000 $\pm$ 0.000 & \textbf{0.885 $\pm$ 0.004} & $1.9 \times 10^{-10}$ \\ 
\midrule
\multirow{3}{*}{8} 
 & 4  & 0.226 $\pm$ 0.000 & \textbf{0.058 $\pm$ 0.003} & $5.2 \times 10^{-8}$ & & 0.000 $\pm$ 0.000 & \textbf{0.875 $\pm$ 0.007} & $1.6 \times 10^{-9}$ \\ 
 & 16 & 0.226 $\pm$ 0.000 & \textbf{0.052 $\pm$ 0.002} & $8.8 \times 10^{-9}$ & & 0.000 $\pm$ 0.000 & \textbf{0.887 $\pm$ 0.004} & $2.4 \times 10^{-10}$ \\ 
 & 64 & 0.226 $\pm$ 0.000 & \textbf{0.050 $\pm$ 0.002} & $3.0 \times 10^{-9}$ & & 0.000 $\pm$ 0.000 & \textbf{0.891 $\pm$ 0.003} & $7.8 \times 10^{-11}$ \\ 
\bottomrule
\end{tabular*}
\end{table}

\newpage

\rev{
\section{Validation of Text-only Probing Strategy}
\label[appendix]{app:text_only_probe}
We hypothesize that visual features are projected into the language embedding space and processed through the same layers as text, making the functional sensitivity of language decoder parameters similar across modalities in practice, consistent with findings \cite{billa2026modality} showing that decoders exhibit isotropic gradient sensitivity across modality-specific and text-aligned directions. 

We verified this hypothesis empirically by comparing text-only probing against image-text probing, where synthetic image inputs are generated by sampling pixel values from a Gaussian distribution parameterized by the channel-wise mean $[0.485, 0.456, 0.406]$ and standard deviation $[0.229, 0.224, 0.225]$ of ImageNet~\cite{ILSVRC15}, following standard normalization conventions.  As shown in \cref{tab:importance_comparison_modality}, two strategies show almost identical rankings in terms of importance, with low Hamming distance and high Spearman correlation, suggesting that text-only probing is sufficient to capture the functional sensitivity of the language decoder parameters in MLLMs.
\begin{table}[htb]
\centering
\caption{\rev{\textbf{Effect of input modality on parameter importance} Hamming distance and Spearman rank correlation between score with image input and without image input. The Hamming distance is computed using binary masks that select the top 10\% of parameters ($\rho = 0.1$). Results are presented as `mean $\pm$ std' over 5 independent runs. $R$ and $N$ denote the number of Rademacher and Monte Carlo samples, respectively.}}
\label{tab:importance_comparison_modality}
\small
\begin{tabular}{cccc}
\toprule
\textbf{R} & \textbf{N} & \textbf{Hamming Distance} ($\downarrow$) & \textbf{Spearman Correlation} ($\uparrow$) \\
\midrule
\multirow{3}{*}{2} 
 & 4  & 0.092 $\pm$ 0.002 & 0.846 $\pm$ 0.007   \\ 
 & 16 & 0.083 $\pm$ 0.003 & 0.874 $\pm$ 0.001    \\ 
 & 64 & 0.080 $\pm$ 0.001 & 0.883 $\pm$ 0.002   \\ 
\midrule
\multirow{3}{*}{4} 
 & 4  & 0.091 $\pm$ 0.003 & 0.850 $\pm$ 0.001 \\ 
 & 16 & 0.083 $\pm$ 0.002 & 0.876 $\pm$ 0.006 \\ 
 & 64 & 0.081 $\pm$ 0.002 & 0.882 $\pm$ 0.004 \\ 
\midrule
\multirow{3}{*}{8} 
 & 4  & 0.092 $\pm$ 0.002 & 0.844 $\pm$ 0.008   \\ 
 & 16 & 0.084 $\pm$ 0.001 & 0.872 $\pm$ 0.004   \\ 
 & 64 & 0.081 $\pm$ 0.001 & 0.882 $\pm$ 0.002 \\ 
\bottomrule
\end{tabular}
\end{table}
}



\rev{
\section{Ablation on Jacobian ($\|J\|_2$) Term}
\label[appendix]{app:jacobian_abl}
To inspect the effect of the Jacobian term on the proposed score, we provide an ablation study. We calculated Wanda \cite{sun2024asimple} style importance score, which consists of the multiplication of input activation and weight magnitude ($\|h^{(l-1)}_j\|_2 \cdot |W^{(l)}_{ij}|$). We calculate the score without the Jacobian term with synthetic input in the same way as in \cref{subsec:estimation}. 

As shown in \cref{tab:jacobian_abl}, our full score consistently outperforms the Wanda-style score without the Jacobian term across all update ratios in terms of H-score ($+2.2$ at $\rho=0.1$, $+1.7$ at $\rho=0.25$, $+0.6$ at $\rho=0.5$). Notably, the performance gap is more pronounced at lower update ratios, suggesting that the Jacobian term is particularly critical when only a small fraction of parameters are updated, and precise identification of functionally sensitive parameters is essential.
\begin{table*}[htb]
    \setlength{\tabcolsep}{1.0mm}
    \caption{\rev{\textbf{Ablation study on the Jacobian term ($\|J_i\|_2$)}. Performance comparison between the full importance score and the Wanda-style score without the Jacobian term ($|W^{(l)}_{ij}| \cdot |h^{(l-1)}_j|$) on COCO-Caption across different update ratios ($\rho$), using LLaVA-1.5-7B with the last 20 layers fine-tuned.}}
    \label{tab:jacobian_abl}
    \begin{center}
    \begin{small}
        \begin{tabular}{ l | c | ccccccc | c | cc}
            \toprule
             \textbf{} &Ratio ($\rho$)& TextVQA & OKVQA & OCR & MMB& MMB \scriptsize(CN) & GQA & \textbf{${A_{up}}$} & COCO-C & Avg $\uparrow$ & H-score $\uparrow$ \\ \midrule
            Zeroshot & \multirow{1}{*}{-} & 58.3	& 58.0	& 65.1	& 64.6&	58.2&	62.0	&61.0	&40.3	&50.7	&48.5 \\ \midrule 
            
            w/o $\|J_i\|_2$ &\multirow{2}{*}{0.1} & 56.0 &	53.8 &	63.6 &	63.4 &	55.2 &	58.8 &	58.4 &	116.0 &	87.2 &	77.7 \\
            Ours & & 57.0&	54.3&	65.0&	62.9&	55.0&	60.3&	\textbf{59.1}&	\textbf{123.6}&	\textbf{91.3}&	\textbf{79.9}\\ \midrule
            
            w/o $\|J_i\|_2$  &\multirow{2}{*}{0.25} & 55.1 &	52.8&	63.1&	64.3	&57.0&	58.5&	58.5&	103.2	&80.8&	74.6\\
            Ours & & 56.3&	53.7&	64.2&	63.1&	56.5&	59.6&	\textbf{58.9}&	\textbf{108.2}&	\textbf{83.5}&	\textbf{76.3} \\ \midrule
            
            w/o $\|J_i\|_2$   &\multirow{2}{*}{0.5} & 54.6&	51.6&	62.6	&64.1&	58.1&	57.7&	58.1&	100.7&	79.4&	73.7 \\
            Ours & & 55.5&	52.0&	63.8&	63.9&	58.0&	58.7&	\textbf{58.6}&	\textbf{101.4}&	\textbf{80.0}&	\textbf{74.3} \\ 

            \bottomrule
        \end{tabular}
    \end{small}
    \end{center}
\end{table*}
}

\section{Empirical result on Continual Learning setting}
\label[appendix]{app:cl_exp}
\subsection{Setup}
In continual learning (CL) settings \cite{chen2024coin, zhao2025mllm, guo-etal-2025-hide, pmlr-v267-chen25n, guo2025federated}, catastrophic forgetting is typically studied as a byproduct of sequential task learning. In these setups, models are initialized from a pretrained backbone (often before visual instruction tuning) and then trained sequentially on a series of tasks. 
Although continual learning and catastrophic forgetting are often closely related, the objective of continual learning fundamentally differs from ours. 
CL aims to enable models to learn tasks in a strictly sequential manner, retaining knowledge from previous tasks while optimizing performance on the current task.
In contrast, our work focuses on mitigating catastrophic forgetting during downstream adaptation, aiming to preserve generalization and zero-shot capabilities on unseen tasks without compromising performance on a single target downstream task. 
We do not assume explicit task sequences or continual task arrival; instead, we study forgetting as a consequence of fine-tuning depth and parameter updates.

To further contextualize our approach, we additionally evaluate \ours under the CL benchmark settings of \cite{zhao2025mllm}, but starting from visual instruction–tuned models, LLaVA-1.5-7B \cite{liu2024improved}. 
The benchmark consists of five downstream tasks from different domains, including: Remote Sensing (RS), Medical (Med), Autonomous Driving (AD), Science (Sci), and Financial (Fin).
For baselines, we include MoELoRA \cite{chen2024coin}, a representative continual learning method, and ModelTailor \cite{zhu2024model} as a reference post-merging approach.

\paragraph{Metrics}
Let $T$ be the total number of tasks and $A_{t,i}$ denote the test accuracy on task $i$ after the model has finished training on task $t$. We employ the following four standard metrics to evaluate the continual learning performance:

\begin{itemize}
    \item \textbf{Mean Finetune Accuracy (MFT)}: 
    This metric measures the model's plasticity, its ability to acquire new knowledge. It is calculated as the average accuracy on each task $i$ immediately after learning that specific task:
    \begin{equation}
        \text{MFT} = \frac{1}{T} \sum_{i=1}^{T} A_{i,i}.
    \end{equation}
    A higher MFT indicates that the model successfully adapts to the downstream distribution of the current task.

    \item \textbf{Mean Final Accuracy (MFN)}: 
    This metric reflects the model's overall competence upon completion of the entire learning sequence. It averages the final performance on all observed tasks:
    \begin{equation}
        \text{MFN} = \frac{1}{T} \sum_{i=1}^{T} A_{T,i}.
    \end{equation}
    Unlike MFT, MFN accounts for subsequent performance degradation. A high MFN requires the model not only to learn new tasks well but also to retain performance on previous ones.

    \item \textbf{Mean Average Accuracy (MAA)}: 
    This metric provides a holistic view of the model's performance stability throughout the learning process. It is defined as the average of the cumulative average accuracies recorded at each step $t$:
    \begin{equation}
        \text{MAA} = \frac{1}{T} \sum_{t=1}^{T} \left( \frac{1}{t} \sum_{i=1}^{t} A_{t,i} \right).
    \end{equation}
    MAA captures the historical trajectory of performance, rewarding models that maintain consistently high accuracy across all known tasks at every stage of training.

    \item \textbf{Backward Transfer (BWT)}: 
    This is the primary metric for quantifying catastrophic forgetting. It measures the average change in accuracy for each task between the moment it was learned and the end of the sequence:
    \begin{equation}
        \text{BWT} = \frac{1}{T-1} \sum_{i=1}^{T-1} (A_{T,i} - A_{i,i}).
    \end{equation}
    A negative BWT value indicates forgetting (performance degradation on past tasks), while a value close to zero implies strong stability (retention of knowledge). Our goal is to maximize BWT (i.e., minimize the magnitude of the negative value).
\end{itemize}

\begin{table}[htb]
    \centering
    \caption{Performance comparison on Continual Learning (CL) settings using \textbf{\textit{LLaVA-v1.5-7B}}. Upstream tasks include TextVQA, POPE, VQAv2, MM-VET, ScienceQA-Img, MMBench (EN\&CN), GQA, and SEED-Bench. The results represent the final performance after completing sequential training on 5 CL tasks.}
    \label{tab:continual_learning_final}
    \begin{small}
    \resizebox{0.99\textwidth}{!}{%
    \begin{tabular}{ l|c|cccccc|ccccc}
    \toprule
     \multirow{2}{*}{{\textbf{Method}}} & \textbf{Upstream} & \multicolumn{6}{c|}{\textbf{Downstream}} & \multicolumn{5}{c}{\textbf{Metrics}} \\
     & \textbf{$A_{up}$} & RS & Med & AD & Sci & Fin & \textbf{$A_{down}$} & MFT & MFN & MAA & BWT & H-score \\ \midrule
    Zeroshot & 63.9 & 32.3 & 35.5 & 15.6 & 42.3 & 62.5 & 37.6 & - & - & - & - & - \\ \midrule
    Full-FT   & 40.2 & 60.0 & 50.1 & 20.6 & 51.0 & \textbf{91.6} & 54.7 & \textbf{70.2} & 54.7 & 66.2 & -15.6 & 46.3 \\
    Tailor    & 40.6 & 66.0 & 50.0 & 21.2 & \underline{51.1} & 91.3 & 55.9 & \underline{69.1} & 55.9 & \underline{66.6} & -13.2 & 47.0 \\
    MoELoRA   & \textbf{61.2} & \underline{73.2} & \underline{51.5} & \underline{35.0} & 49.6 & 90.9 & \underline{60.0} & 67.9 & \underline{60.0} & 64.8 & \underline{-7.8} & \underline{60.6} \\
    Dowser    & \textbf{61.2} & \textbf{78.8} & \textbf{61.2} & \textbf{48.0} & \textbf{53.5} & \underline{91.4} & \textbf{66.6} & \underline{69.1} & \textbf{66.6} & \textbf{69.6} & \textbf{-2.5} & \textbf{63.8} \\ 
    \bottomrule
    \end{tabular}
    }
    \end{small}
\end{table}
\subsection{Experimental Result}
As demonstrated in \Cref{tab:continual_learning_final}, Full-FT and Tailor exhibit significant catastrophic forgetting in the continual learning setting. They show severe forgetting of both the original pretrained knowledge (upstream Avg drops to 40.2) and downstream tasks, as evidenced by high negative Backward Transfer (BWT) scores of -15.6 and -13.2, respectively. This indicates that without explicit protection, the model overwrites previous knowledge when it adapts to a new task.

In contrast, \ours achieves remarkable performance in a continual setting. It keeps the upstream performance average of 61.2, outperforming the Zeroshot baseline and MoELoRA. Most notably, \ours achieves a BWT of -2.5, which is substantially better than MoELoRA (-7.8). This implies that our importance-score-based masking effectively isolates task-specific updates, enabling the model to learn new skills without forgetting.

Moreover, \ours does not harm learning capability. Our method achieves the highest MFN of 66.6 and an H-score of 63.8, outperforming the strongest baseline, MoELoRA (MFN: 60.0, H-score: 60.6). While Full-FT shows a slightly higher Mean Finetune Accuracy (MFT) of 70.2, its final performance collapses due to forgetting. \ours maintains a competitive MFT of 69.1 while ensuring that these gains are retained throughout the sequential training process. This confirms that our sparse fine-tuning strategy can be applied to a continual learning setting.

\end{document}